\documentclass[letterpaper, 10 pt, conference]{ieeeconf}
\IEEEoverridecommandlockouts  

\usepackage{amsmath,amssymb,bm,mathtools}

\usepackage{amsthm}
\usepackage{booktabs}
\usepackage{multirow}
\usepackage{rotating}
\usepackage{algorithm}
\usepackage[noend]{algpseudocode}
\algrenewcommand\algorithmicrequire{\textbf{Input:}}
\algrenewcommand\algorithmicensure{\textbf{Output:}}
\usepackage{graphicx}
\usepackage[hidelinks]{hyperref}
\usepackage{xcolor}

\newtheorem{theorem}{Theorem}
\newtheorem{lemma}[theorem]{Lemma}
\newtheorem{proposition}[theorem]{Proposition}
\newtheorem{corollary}[theorem]{Corollary}
\theoremstyle{definition}
\newtheorem{definition}[theorem]{Definition}
\newtheorem{assumption}[theorem]{Assumption}

\DeclareMathOperator{\conv}{conv}
\DeclareMathOperator{\spn}{span}

\newcommand{\R}{\mathbb{R}}
\newcommand{\Um}{\mathcal{U}}
\newcommand{\W}{\mathcal{W}}
\newcommand{\F}{\mathcal{F}}
\newcommand{\C}{\mathcal{C}}
\newcommand{\Deltasx}{\Delta}
\newcommand{\Lf}{L_f}
\newcommand{\Lg}[1]{L_{g_{#1}}}
\newcommand{\ip}[2]{\langle #1, #2\rangle}
\newcommand{\bo}{\boldsymbol{o}}

\newcommand{\norm}[1]{\lVert #1\rVert}

\title{Exact Feasibility Certification and Optimal Responsibility Allocation for Multi-Robot CBF Safety Filters}

\author{Chandan Kumar Sah and Jishnu Keshavan%
\thanks{The authors are with the Department of Mechanical Engineering,
Indian Institute of Science, Bengaluru 560012, India.
{\tt\footnotesize chandanks@iisc.ac.in, kjishnu@iisc.ac.in}}}

\begin{document}
\maketitle

\begin{abstract}
\noindent
Multi-robot Control Barrier Function (CBF) safety filters can become infeasible, but a failed quadratic program (QP) does not indicate why the conflict occurred or how to resolve it. To address this, we develop an exact feasibility certificate for multi-agent CBF filters with heterogeneous control-affine dynamics and convex input sets. The certificate quantifies a feasibility reserve by separating the demand imposed by safety constraints from the available actuator supply. This decomposition shows when CBF gain tuning or increased actuation can, and cannot, resolve infeasibility, and identifies the agents and interactions responsible for a conflict. We further propose an algorithm to optimally allocate shared safety constraints by maximizing the worst local feasibility margin, yielding a linear program for polyhedral input sets. In $320$ paired closed-loop simulations, the proposed allocation reduces infeasible control steps from roughly $50\%$ to $6.2\%$, and reduces safety-violating runs from $118/160$ to $24/160$. In addition, across $52$ infeasibility events, the certificate identifies an interaction whose relaxation restores feasibility in $94\%$ of cases. 
\noindent
\href{https://cks0314.github.io/page_multi_CBF_feasibililty/}{\color{red}[Project page]}
\href{https://github.com/cks0314/multi_CBF_feasibililty.git}{\color{red}[Code]\color{black}}

\end{abstract}

\section{Introduction}

Control barrier function (CBF) safety filters provide a principled way to enforce safety while retaining a nominal controller. In multi-robot systems,
however, multiple pairwise or higher-order CBF constraints must often be satisfied simultaneously. Although each constraint may be feasible individually, their conjunction can become infeasible. A standard QP returns only a binary infeasibility flag, giving no indication of which interaction caused the conflict, whether the limitation comes from the CBF encoding or available actuation, or how the conflict should be resolved. Existing approaches address infeasibility through feasibility-guaranteeing constructions
\cite{xiao2022sufficient,xiao2022hocbf}, compatibility conditions \cite{tan2022compat}, nonsmooth barrier compositions \cite{glotfelter2017nonsmooth}, actuation-aware formulations \cite{chen2020guaranteed}, or online adaptation of CBF parameters \cite{xiao2021adaptive}. These methods provide ways to avoid or recover from infeasibility, but do not provide an exact characterization of its source.
Closest to ours, \cite{linfeas2026} gives a Farkas-type feasibility test for stacked barrier constraints in LTI systems, where the normals are constant and
the input set polyhedral, and \cite{underact2026} treats opposing HOCBF bounds for a single underactuated system. Both decide feasibility without quantifying it: neither returns a value that separates the demand a constraint imposes from the actuation available to meet it, which is what supports the diagnosis and allocation below. Neither addresses state-dependent constraint normals, arbitrary convex input sets, or the multi-agent setting.

This paper develops an exact conic certificate that turns multi-robot CBF filter infeasibility into an actionable diagnosis. The certificate exactly
characterizes pointwise feasibility and decomposes the feasibility reserve into constraint demand and actuator supply. This decomposition determines
whether infeasibility can be addressed through the CBF encoding, increased actuation, or redistribution of constraint responsibility. It further shows
that, under the standard HOCBF construction, class-$\mathcal K$ gain tuning cannot increase actuator supply, and characterizes underactuated states where increasing actuation cannot restore feasibility.

\begin{figure}[t]
\centering
\includegraphics[width=\columnwidth]{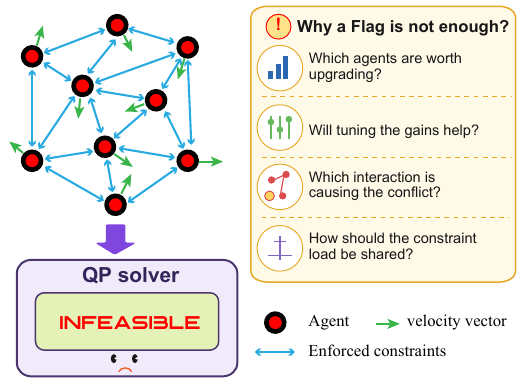}
\caption{An infeasible multi-robot CBF safety filter provides only a binary flag without revealing the cause, the responsible agents, or remedy.}
\end{figure}

The certificate also provides a principled basis for decentralized responsibility allocation. For each shared constraint, responsibility is allocated to maximize the worst local feasibility margin. For polyhedral input sets, this max-min problem reduces to a linear program, directly optimizing local feasibility rather than relying on uniform or capability-weighted heuristics~\cite{cahcbf2026}. The certificate additionally provides sparse conflict attribution through its dual variables and quantifies the marginal value of additional actuation. When no allocation can make all local programs feasible, the same optimization provides the input that maximizes the worst local margin.

Our main contributions are:
\begin{itemize}
    \item We derive an exact conic feasibility certificate that decomposes
    CBF filter feasibility into constraint demand and actuator supply.

    \item We use the certificate to characterize the limits of CBF tuning and
    actuation, identify responsible interactions, and quantify actuator leverage.

    \item We develop a certificate-based responsibility allocation that
    maximizes the worst local feasibility margin.
\end{itemize}
The paper is organized as follows: Section~\ref{sec:prob} formulates the problem; Section~\ref{sec:feas_cert} develops the feasibility certificate; Section~\ref{sec:diag_inf} diagnoses infeasibility; and Section~\ref{sec:alloc} presents the responsibility allocation. Sections~\ref{sec:experiments}–\ref{sec:conc} cover experiments, limitations, and conclusions.

\section{Problem Formulation}
\label{sec:prob}

\subsection{Agents and Constraints}

Consider $N$ agents with heterogeneous control-affine dynamics
\begin{equation}
    \dot{\boldsymbol{x}}_i
    = \boldsymbol{f}_i(\boldsymbol{x}_i)
    + \boldsymbol{g}_i(\boldsymbol{x}_i)\boldsymbol{u}_i,
    \qquad
    \boldsymbol{u}_i\in\mathcal{U}_i\subset\mathbb{R}^{m_i},
\end{equation}
where $\boldsymbol{x}_i\in\mathcal{X}_i\subset\mathbb{R}^{n_i}$, and $\boldsymbol{f}_i,\boldsymbol{g}_i$ are locally Lipschitz and sufficiently smooth. Let
$\boldsymbol{x}=(\boldsymbol{x}_1,\ldots,\boldsymbol{x}_N)$,
$\boldsymbol{u}=(\boldsymbol{u}_1,\ldots,\boldsymbol{u}_N)$, and
$\mathcal{U}=\prod_i\mathcal{U}_i$. For a scalar function $\varphi(\boldsymbol{x})$,
define $\mathcal{L}_f\varphi
=\sum_i\nabla_{\boldsymbol{x}_i}\varphi\,\boldsymbol{f}_i,
\;
\mathcal{L}_{g_i}\varphi
=\nabla_{\boldsymbol{x}_i}\varphi\,\boldsymbol{g}_i
\in\mathbb{R}^{1\times m_i}.$ We index agents by $i$, safety constraints by $k$, and stages of a
barrier chain by $j$. For a nonempty compact convex set $\mathcal{S}\subset\mathbb{R}^m$, its
\emph{support function} is $\sigma_{\mathcal{S}}: \mathbb{R}^m\to\mathbb{R}$,
$\sigma_{\mathcal{S}}(\boldsymbol{\nu}) =\max_{\boldsymbol{s}\in\mathcal{S}}\ip{\boldsymbol{\nu}}{\boldsymbol{s}}$, and is convex. For
$\boldsymbol{\lambda}\in\mathbb{R}^K$ we write
$\operatorname{supp}\boldsymbol{\lambda}=\{k\in\{1,\dots,K\}:\lambda_k\neq0\}$. Let $E$ index a finite collection of safety constraints. Each constraint
$k\in E$ is specified by a barrier function
$h_k:\mathcal{X}\to\mathbb{R}$ with $ h_k(\boldsymbol{x})\ge0.$ Its support $S_k\subseteq\{1,\ldots,N\}$ contains the agents whose states
affect $h_k$, and its arity is $|S_k|$. Thus, $|S_k|=1$, $2$, or $\ge3$ corresponds to single-agent, pairwise, or higher-order interactions, respectively. We write
$E_i=\{k\in E:i\in S_k\}$ for the set of constraints incident to agent~$i$. The joint safe set is $\mathcal{C}
    =\{\boldsymbol{x}\in\mathcal{X}:h_k(\boldsymbol{x})\ge0,\ \forall k\in E\}.$

\begin{assumption}\label{as:sets}
Each $\Um_i$ is nonempty, convex, and compact.
\end{assumption}

\begin{definition}[Uniform relative degree]\label{def:reldeg}
A barrier function $h_k$ has uniform relative degree $r_k \ge 1$ if $\Lg{i}\Lf^{j}h_k \equiv 0$ for all $i\in S_k$ and all $j \le r_k-2$, and
$\Lg{i}\Lf^{r_k-1}h_k \not\equiv 0$ for some $i \in S_k$.
\end{definition}

\subsection{High-Order Barrier Function (HOCBF) Chains}

For higher-order constraints, we enforce safety using a HOCBF chain \cite{xiao2022hocbf}. Following \cite{xiao2022hocbf}, set $\psi_{k,0}=h_k$ and define
\begin{equation}
\psi_{k,j}
=
\dot{\psi}_{k,j-1}+\alpha_{k,j},
\qquad j=1,\ldots,r_k,
\end{equation}
where $\alpha_{k,j}$ are the class-$\mathcal K$ functions. The corresponding admissible set is $\C_\alpha
= \{\boldsymbol{x}:
\psi_{k,j}(\boldsymbol{x})\ge0,\ \forall k\in E,\ 0\le j\le r_k-1\}.$ Under Definition~\ref{def:reldeg}, $\psi_{k,j}$ is independent of the input for $j<r_k$. Consequently, the final stage of the chain is affine in the control and can be written as
\begin{eqnarray}
&\sum_{i\in S_k}
\boldsymbol{G}_{k,i}(\boldsymbol{x})\boldsymbol{u}_i
\geq -b_k(\boldsymbol{x}),
\label{eq:cbf_constraint}\\
&\boldsymbol{G}_{k,i}
:= \Lg{i}\psi_{k,r_k-1},
\;
b_k:=\Lf\psi_{k,r_k-1}+\alpha_{k,r_k}.
\label{eq:Gb}
\end{eqnarray}

Because $h_k$ depends only on the states of agents in $S_k$,
$\boldsymbol{G}_{k,i}\equiv0$ for $i\notin S_k$. Stacking the constraints
therefore gives $\boldsymbol{G}(\boldsymbol{x})\boldsymbol{u}
\ge-\boldsymbol{b}(\boldsymbol{x}),$ where $
\boldsymbol{G}\in
\mathbb{R}^{K\times\sum_i m_i}, $ and $K=|E|.$ The pointwise feasible set is thus
$\F=\{\boldsymbol{x}:
\exists\,\boldsymbol{u}\in\Um,\ 
\boldsymbol{G}(\boldsymbol{x})\boldsymbol{u}
\ge-\boldsymbol{b}(\boldsymbol{x})\}.$ Hence, the safety filter is feasible at $\boldsymbol{x}$ if and only if
$\boldsymbol{x}\in\F$, and $\C_\alpha$ is forward invariant if and only if
$\C_\alpha\subseteq\F$.


\section{An Exact Feasibility Certificate}
\label{sec:feas_cert}

This section characterizes the pointwise feasibility of the safety filter through a scalar reserve $M(\boldsymbol{x})$. To construct it, let
$\boldsymbol{\nu}_{k,i}:=\boldsymbol{G}_{k,i}^{\top}\in\mathbb{R}^{m_i}$ denote the input-space direction associated with constraint $k$ for agent $i$.
For a nonnegative weighting
$\boldsymbol{\lambda}\in\Delta$, where
$\Delta=\{\boldsymbol{\lambda}\in\mathbb{R}^K_{\ge0}:
\sum_k\lambda_k=1\}$, define the combined direction
$\boldsymbol{\nu}_i(\boldsymbol{\lambda})
= \sum_{k:\,i\in S_k}
\lambda_k\boldsymbol{\nu}_{k,i}.$ The maximum input contribution along this direction is given by the support function $\sigma_{\mathcal U_i}(\boldsymbol{\nu}) =
\max_{\boldsymbol{u}\in\mathcal U_i}
\langle\boldsymbol{\nu},\boldsymbol{u}\rangle.$

\begin{definition}[Feasibility reserve]
The \emph{feasibility reserve} at $\boldsymbol{x}$ is $M(\boldsymbol{x})
:=
\min_{\boldsymbol{\lambda}\in\Deltasx}
\left[
\boldsymbol{\lambda}^{\top}\boldsymbol{b}(\boldsymbol{x})
+
\sum_{i=1}^{N}
\sigma_{\Um_i}\!\big(
\boldsymbol{\nu}_i(\boldsymbol{\lambda})
\big)
\right].$ The filter is feasible at $\boldsymbol{x}$ if and only if
$M(\boldsymbol{x})\ge0$.
\end{definition}

In words, the minimization searches for a weighting of the constraints under
which their combined demand exceeds everything the actuators can supply
together. If no such weighting exists, an admissible input does.

\begin{theorem}\label{thm:dual}
Under Assumption~\ref{as:sets}, the minimization defining $M(\boldsymbol{x})$ is a convex program with an attained minimum, and
$\boldsymbol{x}\in\F
\; \Longleftrightarrow \;
M(\boldsymbol{x})\ge0.$
\end{theorem}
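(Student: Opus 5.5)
The plan is to realize $M(\boldsymbol{x})$ as the value of a convex–concave saddle problem and then invoke a minimax theorem.

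\emph{Convexity and attainment.} For fixed $\boldsymbol{x}$, the map $\boldsymbol{\lambda}\mapsto\boldsymbol{\nu}_i(\boldsymbol{\lambda})$ is linear. Under Assumption~\ref{as:sets}, $\sigma_{\Um_i}$ is a finite convex function, since it is a pointwise maximum of linear functions over a compact set. It is also Lipschitz with constant $\max_{\boldsymbol{u}\in\Um_i}\norm{\boldsymbol{u}}$, hence continuous. Therefore the objective
\[
\Phi(\boldsymbol{\lambda}):=\boldsymbol{\lambda}^\top\boldsymbol{b}+\sum_i\sigma_{\Um_i}(\boldsymbol{\nu}_i(\boldsymbol{\lambda}))
\]
is convex and continuous on the simplex $\Deltasx$, which is compact and convex. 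Weierstrass then gives an attained minimum.

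\emph{Rewriting $\Phi$ as a bilinear maximum.} I would define
\[
L(\boldsymbol{\lambda},\boldsymbol{u})=\boldsymbol{\lambda}^\top(\boldsymbol{G}\boldsymbol{u}+\boldsymbol{b}).
\]
Since $\boldsymbol{G}_{k,i}=0$ for $i\notin S_k$, we have $\boldsymbol{\lambda}^\top\boldsymbol{G}\boldsymbol{u}=\sum_i\ip{\boldsymbol{\nu}_i(\boldsymbol{\lambda})}{\boldsymbol{u}_i}$. Because $\Um=\prod_i\Um_i$, the maximization over $\boldsymbol{u}$ separates across agents, so $\max_{\boldsymbol{u}\in\Um}L(\boldsymbol{\lambda},\boldsymbol{u})=\Phi(\boldsymbol{\lambda})$. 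Hence
\[
M=\min_{\boldsymbol{\lambda}\in\Deltasx}\max_{\boldsymbol{u}\in\Um}L(\boldsymbol{\lambda},\boldsymbol{u}).
\]

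\emph{Minimax exchange.} This exchange is the main step. The function $L$ is bilinear, hence convex in $\boldsymbol{\lambda}$ and concave in $\boldsymbol{u}$. Both $\Deltasx$ and $\Um$ are nonempty, compact and convex, so von Neumann's (or Sion's) minimax theorem gives
\[
M=\max_{\boldsymbol{u}\in\Um}\min_{\boldsymbol{\lambda}\in\Deltasx}\boldsymbol{\lambda}^\top(\boldsymbol{G}\boldsymbol{u}+\boldsymbol{b})=\max_{\boldsymbol{u}\in\Um}\min_k\,(\boldsymbol{G}\boldsymbol{u}+\boldsymbol{b})_k.
\]
The inner minimum of a linear function over the simplex is attained at a vertex, which gives the second equality. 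The outer maximum is attained because $\boldsymbol{u}\mapsto\min_k(\cdot)_k$ is continuous and $\Um$ is compact.

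\emph{Conclusion.} From this formula, $M(\boldsymbol{x})\ge0$ holds iff some $\boldsymbol{u}\in\Um$ satisfies $\boldsymbol{G}\boldsymbol{u}+\boldsymbol{b}\ge0$ componentwise, which is exactly $\boldsymbol{x}\in\F$. Written out:
\begin{itemize}
\item If $\boldsymbol{x}\in\F$, a feasible $\boldsymbol{u}$ makes every component nonnegative, so $M\ge0$ (weak duality suffices here).
\item If $M\ge0$, the maximizing $\boldsymbol{u}^\star$ satisfies $\min_k(\boldsymbol{G}\boldsymbol{u}^\star+\boldsymbol{b})_k\ge0$, so $\boldsymbol{u}^\star$ is feasible.
\end{itemize}

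\emph{Expected obstacle.} The only delicate point is justifying the minimax exchange; compactness of both sets (Assumption~\ref{as:sets}) is what makes it work. As an alternative to citing the minimax theorem, one could prove the direction $M\ge0\Rightarrow\boldsymbol{x}\in\F$ by contrapositive with a separating hyperplane. If no feasible input exists, the compact convex set $\{\boldsymbol{G}\boldsymbol{u}+\boldsymbol{b}:\boldsymbol{u}\in\Um\}$ is disjoint from the closed convex cone $\R^K_{\ge0}$. Strict separation then yields a nonzero $\boldsymbol{\lambda}\ge0$ which, after normalizing onto $\Deltasx$, satisfies $\boldsymbol{\lambda}^\top(\boldsymbol{G}\boldsymbol{u}+\boldsymbol{b})<0$ for all $\boldsymbol{u}\in\Um$. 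Maximizing over the compact set $\Um$ gives $\Phi(\boldsymbol{\lambda})<0$, hence $M<0$.
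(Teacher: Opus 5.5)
Your argument is correct, but your main route differs from the paper's. The paper proves the equivalence the way your closing ``alternative'' sketches it. It strictly separates the compact convex set $\{\boldsymbol{G}\boldsymbol{u}+\boldsymbol{b}:\boldsymbol{u}\in\Um\}$ from the orthant $\R^K_{\ge0}$, and reads off $\gamma<0$ from $\boldsymbol{y}=\boldsymbol{0}$ and $\boldsymbol{\lambda}\ge0$ from $\boldsymbol{y}=t\boldsymbol{e}_k$ with $t\to\infty$. It then normalizes onto $\Deltasx$ and handles the converse by the weak-duality inequality. Only at the end does it split $\sigma_{\Um}(\boldsymbol{G}^\top\boldsymbol{\lambda})$ across agents.

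Your primary argument instead writes $M$ as a bilinear saddle value and applies von Neumann/Sion. This gives the primal identity $M(\boldsymbol{x})=\max_{\boldsymbol{u}\in\Um}\min_k(\boldsymbol{G}\boldsymbol{u}+\boldsymbol{b})_k$, which is the same exchange the paper uses later, agent by agent, in Lemma~\ref{lem:localexact}. It yields more than the sign test. It shows that $M$ is exactly the largest uniform slack any admissible input can achieve, so the ``reserve'' has a primal meaning, and a maximizing input certifies feasibility whenever $M\ge0$. It also places Theorem~\ref{thm:dual} and Lemma~\ref{lem:localexact} on the same footing.

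The paper's route needs only the separation theorem, which is more elementary, since minimax theorems are themselves usually proved by separation. It also produces the infeasibility certificate $\boldsymbol{\lambda}$ directly. However, it establishes only the sign equivalence, not the value identity.
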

\begin{proof}
Let $\phi(\boldsymbol{\lambda})
:=\boldsymbol{\lambda}^{\top}\boldsymbol{b}
+\sigma_{\mathcal U}(\boldsymbol{G}^{\top}\boldsymbol{\lambda})$.
Since $\boldsymbol{G}^{\top}\boldsymbol{\lambda}$ is linear in $\boldsymbol{\lambda}$ and support functions are convex, $\phi$ is convex. It is also continuous because $\mathcal U$ is compact. Since $\Delta$ is
compact, the minimum of $\phi$ over $\Delta$ is attained. We next establish the feasibility equivalence. Define
$\mathcal S:=\{\boldsymbol{G}\boldsymbol{u}+\boldsymbol{b}:
\boldsymbol{u}\in\mathcal U\}$. By convexity and compactness of $\mathcal U$,
$\mathcal S$ is convex and compact. The safety filter is feasible if and only if
$\mathcal S\cap\mathbb R^K_{\ge0}\neq\emptyset$.

Suppose first that $\boldsymbol{x}\notin\mathcal F$. Then
$\mathcal S\cap\mathbb R^K_{\ge0}=\emptyset$. By the strict separation theorem,
there exist $\boldsymbol{\lambda}\neq0$ and $\gamma\in\mathbb R$ such that
$\boldsymbol{\lambda}^{\top}\boldsymbol{z}<\gamma$ for all
$\boldsymbol{z}\in\mathcal S$, and
$\boldsymbol{\lambda}^{\top}\boldsymbol{y}>\gamma$ for all
$\boldsymbol{y}\in\mathbb R^K_{\ge0}$. Setting $\boldsymbol{y}=\boldsymbol{0}$
gives $\gamma<0$. Moreover, if $\lambda_k<0$ for some $k$, then choosing
$\boldsymbol{y}=t\boldsymbol{e}_k$ ($\boldsymbol{e}_k$ denotes the $k$th standard basis vector in $\mathbb{R}^K$) with $t\to\infty$ violates the second
inequality; hence $\boldsymbol{\lambda}\ge0$. Therefore,
$\sup_{\boldsymbol{u}\in\mathcal U}
\boldsymbol{\lambda}^{\top}(\boldsymbol{G}\boldsymbol{u}+\boldsymbol{b})<0$.
By positive homogeneity, normalize $\boldsymbol{\lambda}$ so that
$\boldsymbol{1}^{\top}\boldsymbol{\lambda}=1$. Thus
$\boldsymbol{\lambda}\in\Delta$ and $M(\boldsymbol{x})<0$.

Conversely, suppose that $M(\boldsymbol{x})<0$. Then there exists $\boldsymbol{\lambda}\in\Delta$ such that $\boldsymbol{\lambda}^{\top}\boldsymbol{b}
+\sigma_{\mathcal U}(\boldsymbol{G}^{\top}\boldsymbol{\lambda})<0$.
For any $\boldsymbol{u}\in\mathcal U$,
$\boldsymbol{\lambda}^{\top}(\boldsymbol{G}\boldsymbol{u}+\boldsymbol{b})
\le\boldsymbol{\lambda}^{\top}\boldsymbol{b}
+\sigma_{\mathcal U}(\boldsymbol{G}^{\top}\boldsymbol{\lambda})<0$.
Since $\boldsymbol{\lambda}\ge0$, this is incompatible with
$\boldsymbol{G}\boldsymbol{u}+\boldsymbol{b}\ge0$. Thus
$\boldsymbol{x}\notin\mathcal F$.

Finally, using the product structure $\mathcal U=\prod_i\mathcal U_i$ and the
block decomposition
$\boldsymbol{G}^{\top}\boldsymbol{\lambda}
=(\boldsymbol{\nu}_1(\boldsymbol{\lambda}),\ldots,
\boldsymbol{\nu}_N(\boldsymbol{\lambda}))$, the support function separates as
$\sigma_{\mathcal U}(\boldsymbol{G}^{\top}\boldsymbol{\lambda})
=\sum_{i=1}^N\sigma_{\mathcal U_i}
(\boldsymbol{\nu}_i(\boldsymbol{\lambda}))$.
Substitution gives the stated expression for $M(\boldsymbol{x})$.  
\end{proof} 

The above result follows from separation of a compact convex set, rather than from KKT conditions. The reserve measures \emph{demand} against \emph{supply}. The term $\boldsymbol{\lambda}^{\top} \boldsymbol{b}$ represents the demand imposed by
the constraints, while $\sum_i\sigma_{\Um_i}(\boldsymbol{\nu}_i(\boldsymbol{\lambda}))$ measures the available actuation capability. Feasibility holds when supply meets demand for every weighting $\boldsymbol{\lambda}$. Note that the certificate is instantaneous and does not by itself imply feasibility over a future trajectory.

\begin{lemma}\label{lem:offset}
Suppose $\Um_i=\bo_i+\W_i$, where $\W_i$ is symmetric, convex, and compact.
Then
$\sigma_{\Um_i}(\boldsymbol{\nu})
=\ip{\boldsymbol{\nu}}{\bo_i}+\sigma_{\W_i}(\boldsymbol{\nu})$,
and the feasibility reserve can be written as
$M(\boldsymbol{x})=\min_{\boldsymbol{\lambda}\in\Deltasx}
[\boldsymbol{\lambda}^{\top}\boldsymbol{b}'(\boldsymbol{x})
+S(\boldsymbol{x},\boldsymbol{\lambda})]$, where
\begin{equation}
b'_k=b_k+\sum_{i\in S_k}\ip{\boldsymbol{\nu}_{k,i}}{\bo_i},
\;
S(\boldsymbol{x},\boldsymbol{\lambda})
=\sum_i\sigma_{\W_i}(\boldsymbol{\nu}_i(\boldsymbol{\lambda})).
\end{equation}
\end{lemma}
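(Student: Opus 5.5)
The plan is a direct computation with the support function, followed by substitution into the expression for $M(\boldsymbol{x})$ given by Theorem~\ref{thm:dual}. First, fix $\boldsymbol{\nu}\in\R^{m_i}$. Every $\boldsymbol{u}\in\Um_i$ can be written uniquely as $\boldsymbol{u}=\bo_i+\boldsymbol{w}$ with $\boldsymbol{w}\in\W_i$. Hence
\[
\sigma_{\Um_i}(\boldsymbol{\nu})=\max_{\boldsymbol{w}\in\W_i}\big(\ip{\boldsymbol{\nu}}{\bo_i}+\ip{\boldsymbol{\nu}}{\boldsymbol{w}}\big)=\ip{\boldsymbol{\nu}}{\bo_i}+\sigma_{\W_i}(\boldsymbol{\nu}).
\]
The maximum exists because $\W_i$ is compact, so $\Um_i$ is nonempty, convex, and compact, and Assumption~\ref{as:sets} holds. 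Symmetry of $\W_i$ is not needed for this identity. Its role is interpretive: it makes $\sigma_{\W_i}$ even and nonnegative, so $S$ is a genuine supply term centered at the offset $\bo_i$. I will remark on this rather than use it.

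Second, substitute $\boldsymbol{\nu}=\boldsymbol{\nu}_i(\boldsymbol{\lambda})=\sum_{k\in E_i}\lambda_k\boldsymbol{\nu}_{k,i}$ and sum over $i$. Linearity of the inner product gives
\[
\sum_i\ip{\boldsymbol{\nu}_i(\boldsymbol{\lambda})}{\bo_i}=\sum_i\sum_{k\in E_i}\lambda_k\ip{\boldsymbol{\nu}_{k,i}}{\bo_i}.
\]
The only bookkeeping step is to exchange the double sum. The index set $\{(i,k):k\in E_i\}$ equals $\{(k,i):i\in S_k\}$, since $k\in E_i$ exactly when $i\in S_k$. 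The sum therefore becomes
\[
\sum_k\lambda_k\sum_{i\in S_k}\ip{\boldsymbol{\nu}_{k,i}}{\bo_i}.
\]
Adding this to $\boldsymbol{\lambda}^\top\boldsymbol{b}$ yields $\boldsymbol{\lambda}^\top\boldsymbol{b}'$ with $b'_k$ as stated. The remaining terms are exactly $S(\boldsymbol{x},\boldsymbol{\lambda})$. This holds pointwise in $\boldsymbol{\lambda}$, so the minima over $\Deltasx$ coincide, and attainment carries over from Theorem~\ref{thm:dual}.

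There is no real obstacle. The only points needing care are the reindexing between $E_i$ and $S_k$ and noting that $\boldsymbol{G}_{k,i}\equiv0$ for $i\notin S_k$, so no terms are lost.
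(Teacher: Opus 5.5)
Your proposal is correct and follows the paper's proof. Like the paper, you compute $\sigma_{\Um_i}$ by translating the maximization over $\W_i$, then substitute into $M$ and regroup the offset terms into $\boldsymbol{\lambda}^{\top}\boldsymbol{b}'$. Your explicit exchange of the double sum via $k\in E_i \Leftrightarrow i\in S_k$ just spells out the regrouping step the paper leaves implicit, and you are right that symmetry of $\W_i$ is not needed for the identity itself.
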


\begin{proof}
For $\Um_i=\bo_i+\W_i$,
$\sigma_{\Um_i}(\boldsymbol{\nu})
=\max_{\boldsymbol{w}\in\W_i}
\ip{\boldsymbol{\nu}}{\bo_i+\boldsymbol{w}}
=\ip{\boldsymbol{\nu}}{\bo_i}+\sigma_{\W_i}(\boldsymbol{\nu})$.
Substituting this into the definition of $M$ and using
$\boldsymbol{\nu}_i(\boldsymbol{\lambda})
=\sum_{k:i\in S_k}\lambda_k\boldsymbol{\nu}_{k,i}$
combines the offset terms, by Lemma~\ref{lem:offset}, into $\boldsymbol{\lambda}^{\top} \boldsymbol{b}'$ and leaves $S(\boldsymbol{x},\boldsymbol{\lambda})$.  
\end{proof}

This decomposition covers standard input sets such as balls, ellipsoids, boxes, and segments, while asymmetries such as thrust bounds
$[\boldsymbol{u}_{\min},\boldsymbol{u}_{\max}]$ are absorbed into the effective demand $\boldsymbol{b}'$. Define $S^\star(\boldsymbol{x})
:=\min_{\boldsymbol{\lambda}\in\Deltasx}S(\boldsymbol{x},\boldsymbol{\lambda})$ as the supply.

\begin{theorem}\label{thm:invariance}
If $\alpha_{k,j}$ is chain-valued for every $j \le r_k-1$, then for all $i\in S_k$ $\boldsymbol{G}_{k,i}(\boldsymbol{x}) = \Lg{i}\Lf^{\,r_k-1}h_k(\boldsymbol{x}),$ independently of $\alpha_{k,1},\dots,\alpha_{k,r_k}$.
\end{theorem}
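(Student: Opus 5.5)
The plan is to read ``chain-valued'' as the standard HOCBF convention: for $j\le r_k-1$, each class-$\mathcal K$ term is a composition $\alpha_{k,j}(\boldsymbol{x})=\alpha_{k,j}\big(\psi_{k,j-1}(\boldsymbol{x})\big)$ with a scalar, sufficiently differentiable class-$\mathcal K$ function of the previous stage alone. The proof will then be an induction along the chain. It shows that every intermediate stage is $\Lf^{j}h_k$ plus a smooth function of lower-order Lie derivatives. Under Definition~\ref{def:reldeg}, every such correction is annihilated by $\Lg{i}$.

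\textbf{Inductive claim.} For each $0\le j\le r_k-1$ I will prove
\[
\psi_{k,j}=\Lf^{j}h_k+\Phi_{k,j}\big(h_k,\Lf h_k,\dots,\Lf^{j-1}h_k\big),
\]
where $\Phi_{k,j}$ is a $C^1$ function that depends on $\alpha_{k,1},\dots,\alpha_{k,j}$, and $\Phi_{k,0}\equiv0$. A direct consequence is that $\Lg{i}\psi_{k,j}\equiv0$ for all $i\in S_k$ whenever $j\le r_k-2$. Indeed, by the chain rule,
\[
\Lg{i}\psi_{k,j}=\Lg{i}\Lf^{j}h_k+\sum_{l=0}^{j-1}\partial_l\Phi_{k,j}\,\Lg{i}\Lf^{l}h_k,
\]
and every index appearing here is at most $j\le r_k-2$, so each term vanishes by Definition~\ref{def:reldeg}. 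For $i\notin S_k$ the gradient $\nabla_{\boldsymbol{x}_i}$ of each $\Lf^l h_k$ is zero anyway.

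\textbf{Induction step.} Assume the claim for $j-1\le r_k-2$. Then $\Lg{i}\psi_{k,j-1}\equiv0$ for all $i$, so along trajectories
\[
\dot\psi_{k,j-1}=\Lf\psi_{k,j-1}+\sum_i\Lg{i}\psi_{k,j-1}\boldsymbol{u}_i=\Lf\psi_{k,j-1}.
\]
This is exactly the input-independence of the intermediate stages noted after Definition~\ref{def:reldeg}. Next I compute
\[
\Lf\psi_{k,j-1}=\Lf^{j}h_k+\sum_{l=0}^{j-2}\partial_l\Phi_{k,j-1}\,\Lf^{l+1}h_k,
\]
where each coefficient $\partial_l\Phi_{k,j-1}$ is evaluated at $(h_k,\dots,\Lf^{j-2}h_k)$. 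The correction sum is therefore a function of $h_k,\dots,\Lf^{j-1}h_k$. By chain-valuedness, the added term $\alpha_{k,j}(\psi_{k,j-1})$ is a function of $h_k,\dots,\Lf^{j-1}h_k$ as well. Their sum defines $\Phi_{k,j}$, which closes the induction.

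\textbf{Conclusion.} Take $j=r_k-1$ in the claim and apply $\Lg{i}$ to get
\[
\boldsymbol{G}_{k,i}=\Lg{i}\psi_{k,r_k-1}=\Lg{i}\Lf^{r_k-1}h_k+\sum_{l=0}^{r_k-2}\partial_l\Phi_{k,r_k-1}\,\Lg{i}\Lf^{l}h_k.
\]
The sum vanishes by relative degree, which leaves $\boldsymbol{G}_{k,i}=\Lg{i}\Lf^{r_k-1}h_k$, and this expression does not involve $\alpha_{k,1},\dots,\alpha_{k,r_k-1}$. The last function $\alpha_{k,r_k}$ appears only in $b_k$ by \eqref{eq:Gb}, so $\boldsymbol{G}_{k,i}$ is independent of all the $\alpha$'s.

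\textbf{Main obstacle.} The delicate step is the regularity bookkeeping in the induction step. Differentiating $\alpha_{k,j}(\psi_{k,j-1})$ at later stages needs $\alpha_{k,j}$ to be roughly $C^{r_k-j}$, so I would state this as part of ``chain-valued'' together with the smoothness of $\boldsymbol{f}_i,\boldsymbol{g}_i$. The second point is that chain-valuedness is used essentially. If $\alpha_{k,j}$ were an arbitrary state-dependent function, $\Lg{i}\alpha_{k,j}$ need not vanish, and the claimed independence of $\boldsymbol{G}_{k,i}$ from the gains would fail.
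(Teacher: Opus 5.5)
Your proposal is correct and follows essentially the same route as the paper: your inductive claim is exactly the chain-closure Lemma~\ref{lem:closure} in Appendix~A. The theorem's proof then applies $\Lg{i}$ at $j=r_k-1$, removes the correction terms using Definition~\ref{def:reldeg}, and notes that $\alpha_{k,r_k}$ enters only $b_k$. You read ``chain-valued'' as $\alpha_{k,j}(\psi_{k,j-1})$, which is a special case of what the paper actually uses (that $\alpha_{k,j}$ depends only on $h_k,\dots,\Lf^{j-1}h_k$), but your induction step goes through unchanged under that broader reading.
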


\begin{proof}
By Lemma~\ref{lem:closure} (Appendix~A), at $j=r_k-1$,
$\psi_{k,r_k-1}=\Lf^{r_k-1}h_k+\Phi_{k,r_k-1}(h_k,\dots,\Lf^{r_k-2}h_k)$.
Then $\Lg{i}\psi_{k,r_k-1}=\Lg{i}\Lf^{r_k-1}h_k+\sum_{m\le r_k-2}\partial_m\Phi\,\Lg{i}\Lf^{m}h_k$,
and each term in the sum vanishes by Definition~\ref{def:reldeg}. The last
function $\alpha_{k,r_k}$ enters only $b_k$ in \eqref{eq:Gb}.  
\end{proof}

Theorems~\ref{thm:dual} and~\ref{thm:invariance} establish the key separation used throughout the paper. The reserve decomposes into a demand term
$\boldsymbol{\lambda}^\top\boldsymbol{b}$ and a supply term $\sum_i\sigma_{\Um_i}(\boldsymbol{\nu}_i(\boldsymbol{\lambda})))$, where the
latter depends only on the barriers, configuration, and dynamics. Hence, class-$\mathcal{K}$ tuning can modify the demand but cannot change the available
input supply. The achievable feasibility margin is therefore determined by both how the constraint is encoded and by the robots' actuation geometry.

Theorem~\ref{thm:invariance} has one exception: the penultimate class-$\mathcal{K}$ function, $\alpha_{k,r_k-1}$, may depend on states outside its own chain. This affects the constraint geometry rather than the demand term and is not used in our proposed allocation algorithm. We therefore defer it to Appendix~A.

\section{Diagnosing Infeasibility}
\label{sec:diag_inf}
The feasibility certificate (Theorem~\ref{thm:dual}) also enables us to diagnose the source of infeasibility. In particular, the demand-supply decomposition distinguishes failures due to insufficient actuator capability from those caused by conflicting constraints.

\subsection{Actuation limitations}
\label{sec:degen}

This section characterizes the states where increased actuation cannot restore feasibility. Such states cannot arise under full actuation, since a strictly separating
$\boldsymbol{y}_i\in\W_i$ then exists for every constraint and forces $S^\star>0$, but
they can occur for underactuated agents. Throughout, let $\Um_i=\bo_i+\W_i$, where $\W_i$ is symmetric, and consider the scaled sets
$\Um_i^\rho=\bo_i+\rho\W_i$, where $\rho>0$. Increasing $\rho$ increases actuation magnitude while preserving the drift and input directions.

The condition below says that some weighted combination of the constraint
normals points in a direction no agent can push along. Scaling the input sets
then changes nothing, because every support function in the supply is evaluated
at a direction the agent cannot act in.

\begin{theorem}\label{thm:degen}
The supply $S^\star(\boldsymbol{x})=0$ if and only if there exists
$\boldsymbol{\lambda}^\star\in\Deltasx$ such that
\begin{equation}
\label{eq:degen}
\boldsymbol{\nu}_i(\boldsymbol{\lambda}^\star)
\perp \spn\W_i,
\qquad \forall i.
\end{equation}
If $\W_i$ is full-dimensional in $\mathbb{R}^{m_i}$ for every $i$, equivalently
$0\in\operatorname{int}\Um_i$, then \eqref{eq:degen} reduces to $\boldsymbol{\nu}_i (\boldsymbol{\lambda}^\star)=0$ for all $i$. Equivalently, the origin lies in the convex hull of the constraint directions, $0\in\conv\{\boldsymbol{A}\boldsymbol{e}_k:k\in E\},$ where $\boldsymbol{A}:\mathbb{R}^K\to\mathbb{R}^{\sum_i m_i}$ is defined by $\boldsymbol{A} \boldsymbol{\lambda}
= (\boldsymbol{\nu}_i(\boldsymbol{\lambda}))_i$.
\end{theorem}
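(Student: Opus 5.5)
The proof reduces to one fact about support functions of symmetric sets, applied agent by agent. By Lemma~\ref{lem:offset}, the supply is $S^\star(\boldsymbol{x})=\min_{\boldsymbol{\lambda}\in\Deltasx}\sum_i\sigma_{\W_i}(\boldsymbol{\nu}_i(\boldsymbol{\lambda}))$. This minimum is attained, since each $\sigma_{\W_i}$ is continuous (as $\W_i$ is compact) and $\Deltasx$ is compact. The key fact is that for a nonempty, symmetric, convex, compact $\W_i$,
\[
\sigma_{\W_i}(\boldsymbol{\nu})\ge 0 \ \text{ for all } \boldsymbol{\nu},
\qquad
\sigma_{\W_i}(\boldsymbol{\nu})=0 \iff \boldsymbol{\nu}\perp\spn\W_i .
\]
For nonnegativity, pick any $\boldsymbol{w}\in\W_i$. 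Symmetry gives $-\boldsymbol{w}\in\W_i$, so $\sigma_{\W_i}(\boldsymbol{\nu})\ge\max(\ip{\boldsymbol{\nu}}{\boldsymbol{w}},-\ip{\boldsymbol{\nu}}{\boldsymbol{w}})=|\ip{\boldsymbol{\nu}}{\boldsymbol{w}}|\ge0$. For the characterization, $\sigma_{\W_i}(\boldsymbol{\nu})=0$ forces $\ip{\boldsymbol{\nu}}{\boldsymbol{w}}=0$ for every $\boldsymbol{w}\in\W_i$, and by linearity this extends to all of $\spn\W_i$. Conversely, if $\boldsymbol{\nu}\perp\spn\W_i$, then every term in the maximum defining $\sigma_{\W_i}(\boldsymbol{\nu})$ is zero.

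The main equivalence then follows directly. Each summand of the supply is nonnegative, so $S^\star\ge0$. Hence $S^\star=0$ holds exactly when some minimizer $\boldsymbol{\lambda}^\star\in\Deltasx$ makes every summand vanish, which by the fact above is exactly \eqref{eq:degen}. In the forward direction I take $\boldsymbol{\lambda}^\star$ to be the attained minimizer. In the reverse direction, \eqref{eq:degen} gives $0\le S^\star\le\sum_i\sigma_{\W_i}(\boldsymbol{\nu}_i(\boldsymbol{\lambda}^\star))=0$.

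For the full-dimensional case, I first justify the stated equivalence with $0\in\operatorname{int}\Um_i$. The statement seems to intend $\W_i$ full-dimensional, equivalently $0\in\operatorname{int}\W_i$. This holds because a symmetric convex set with nonempty interior contains $0$ in its interior: if $\boldsymbol{w}$ is interior, so is $-\boldsymbol{w}$, and $0$ is their midpoint. With $\W_i$ full-dimensional, $\spn\W_i=\mathbb{R}^{m_i}$, so \eqref{eq:degen} reduces to $\boldsymbol{\nu}_i(\boldsymbol{\lambda}^\star)=0$ for all $i$. Stacking these gives $\boldsymbol{A}\boldsymbol{\lambda}^\star=0$ with $\boldsymbol{\lambda}^\star\in\Deltasx$. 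Since $\boldsymbol{A}$ is linear, $\boldsymbol{A}\boldsymbol{\lambda}^\star=\sum_k\lambda^\star_k\boldsymbol{A}\boldsymbol{e}_k$. As $\boldsymbol{\lambda}^\star$ ranges over $\Deltasx$, these sums are exactly the convex combinations of the $\boldsymbol{A}\boldsymbol{e}_k$, which yields $0\in\conv\{\boldsymbol{A}\boldsymbol{e}_k:k\in E\}$.

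I do not expect a real obstacle. The only delicate points are the two uses of symmetry, namely the nonnegativity of $\sigma_{\W_i}$ and the location of $0$ in the full-dimensional case, together with a careful reading of the interior condition, which properly concerns $\W_i$ rather than $\Um_i$.
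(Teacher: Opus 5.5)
Your proof is correct and matches the paper's argument step for step: nonnegativity of $\sigma_{\W_i}$ from symmetry, the characterization $\sigma_{\W_i}(\boldsymbol{\nu})=0\iff\boldsymbol{\nu}\perp\spn\W_i$, attainment of the minimum over the compact simplex, and, in the full-dimensional case, the reduction to $\boldsymbol{A}\boldsymbol{\lambda}^\star=0$, i.e.\ $0\in\conv\{\boldsymbol{A}\boldsymbol{e}_k:k\in E\}$. Your remark that the interior condition should read $0\in\operatorname{int}\W_i$ rather than $0\in\operatorname{int}\Um_i$ is also right, and the paper's proof does not address it: with $\Um_i=[1,3]$ and $\W_i=[-1,1]$, the set $\W_i$ is full-dimensional but $0\notin\operatorname{int}\Um_i$.
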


\begin{proof}
Since $0\in\W_i$ and $\W_i$ is symmetric,
$\sigma_{\W_i}(\boldsymbol{\nu})\ge0$. Moreover,
$\sigma_{\W_i}(\boldsymbol{\nu})=0$ if and only if
$\ip{\boldsymbol{\nu}}{\boldsymbol{w}}=0$ for every
$\boldsymbol{w}\in\W_i$, which is equivalent to
$\boldsymbol{\nu}\perp\spn\W_i$. Hence
$S(\boldsymbol{x},\boldsymbol{\lambda})=0$ if and only if $\boldsymbol{\nu}_i (\boldsymbol{\lambda})\perp\spn\W_i$ for every $i$. Since $\Deltasx$ is compact, the minimum defining $S^\star$ is attained, which proves the first claim. If $\W_i$ is full-dimensional, then $\spn\W_i=\mathbb{R}^{m_i}$, so \eqref{eq:degen} is equivalent to $\boldsymbol{\nu}_i (\boldsymbol{\lambda}^\star)=0$ for every $i$. Thus
$\boldsymbol{A} \boldsymbol{\lambda}^\star=0$ with
$\boldsymbol{\lambda}^\star\in\Deltasx$, which is equivalent to $0\in\conv\{\boldsymbol{A} \boldsymbol{e}_k:k\in E\}$.  
\end{proof}
In words, when \(S^\star(\boldsymbol{x})=0\), some convex combination of the constraint directions is orthogonal to the agents' available actuation subspaces. Consequently, scaling the input sets cannot increase the supply, since the
corresponding support-function contributions are zero
(Corollary~\ref{cor:scale}).

\begin{corollary}\label{cor:scale}
Let $\boldsymbol{\lambda}^\star$ satisfy \eqref{eq:degen}. Then $M^\rho(\boldsymbol{x}) \le\boldsymbol{\lambda}^{\star\top}\boldsymbol{b}'(\boldsymbol{x})$ for every $\rho>0$, where $M^\rho$ is the corresponding reserve for $\Um^\rho$. In particular, if $\boldsymbol{\lambda}^{\star\top} \boldsymbol{b}'(\boldsymbol{x})<0$ then
$\boldsymbol{x}\notin\F^\rho$ for every $\rho>0$, i.e., no scaling of the input sets renders $\boldsymbol{x}$
feasible.
\end{corollary}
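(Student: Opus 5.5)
The plan is to evaluate the minimization defining the scaled reserve at the particular weighting $\boldsymbol{\lambda}^\star$ and to show that the supply term vanishes there, whatever the value of $\rho$. First, applying Lemma~\ref{lem:offset} to $\Um_i^\rho=\bo_i+\rho\W_i$ is legitimate, since $\rho\W_i$ is again symmetric, convex, and compact. This gives
\[
M^\rho(\boldsymbol{x})=\min_{\boldsymbol{\lambda}\in\Deltasx}\Big[\boldsymbol{\lambda}^{\top}\boldsymbol{b}'(\boldsymbol{x})+\sum_i\sigma_{\rho\W_i}(\boldsymbol{\nu}_i(\boldsymbol{\lambda}))\Big].
\]
The important point is that the effective demand $\boldsymbol{b}'$ does not depend on $\rho$. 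Indeed, $b'_k=b_k+\sum_{i\in S_k}\ip{\boldsymbol{\nu}_{k,i}}{\bo_i}$ involves only the offsets $\bo_i$, which scaling leaves unchanged.

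Second, I would use positive homogeneity of the support function: $\sigma_{\rho\W_i}(\boldsymbol{\nu})=\rho\,\sigma_{\W_i}(\boldsymbol{\nu})$. At $\boldsymbol{\lambda}^\star$, condition \eqref{eq:degen} gives $\boldsymbol{\nu}_i(\boldsymbol{\lambda}^\star)\perp\spn\W_i$. Hence $\ip{\boldsymbol{\nu}_i(\boldsymbol{\lambda}^\star)}{\boldsymbol{w}}=0$ for all $\boldsymbol{w}\in\W_i$, so $\sigma_{\W_i}(\boldsymbol{\nu}_i(\boldsymbol{\lambda}^\star))=0$, exactly as in the proof of Theorem~\ref{thm:degen}. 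The supply therefore vanishes at $\boldsymbol{\lambda}^\star$ for every $\rho$. Bounding the minimum by its value at the feasible point $\boldsymbol{\lambda}^\star\in\Deltasx$ then yields $M^\rho(\boldsymbol{x})\le\boldsymbol{\lambda}^{\star\top}\boldsymbol{b}'(\boldsymbol{x})$.

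Third, for the ``in particular'' statement, suppose $\boldsymbol{\lambda}^{\star\top}\boldsymbol{b}'<0$. Then $M^\rho(\boldsymbol{x})<0$ for all $\rho>0$. Theorem~\ref{thm:dual} applies to $\Um^\rho$, since Assumption~\ref{as:sets} is preserved under scaling, and it gives $\boldsymbol{x}\notin\F^\rho$.

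None of these steps is a real obstacle. The only point that needs care is confirming that $\boldsymbol{b}'$ and the directions $\boldsymbol{\nu}_{k,i}$ are independent of $\rho$. This holds because the $\boldsymbol{\nu}_{k,i}$ depend only on the barriers and the dynamics, through $\boldsymbol{G}$, and $\boldsymbol{b}'$ depends only on $\boldsymbol{b}$, $\boldsymbol{G}$, and the offsets $\bo_i$.
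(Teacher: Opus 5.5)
Your proposal is correct and follows essentially the same route as the paper. The paper likewise notes that $\boldsymbol{b}'$ is independent of $\rho$, uses $\sigma_{\rho\W_i}(\boldsymbol{\nu}_i(\boldsymbol{\lambda}^\star))=\rho\,\sigma_{\W_i}(\boldsymbol{\nu}_i(\boldsymbol{\lambda}^\star))=0$ to bound the minimum by its value at $\boldsymbol{\lambda}^\star$, and then invokes Theorem~\ref{thm:dual}; your explicit checks that Lemma~\ref{lem:offset} and Assumption~\ref{as:sets} still apply after scaling are a welcome addition the paper leaves implicit.
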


\begin{proof}
$\boldsymbol{b}'$ does not depend on $\rho$, and
$\sigma_{\rho\W_i}(\boldsymbol{\nu}_i(\boldsymbol{\lambda}^\star))=\rho\,\sigma_{\W_i}(\boldsymbol{\nu}_i(\boldsymbol{\lambda}^\star))=0$.
Hence $M^\rho(\boldsymbol{x}) \le \boldsymbol{\lambda}^{\star\top}\boldsymbol{b}'+0$, and Theorem~\ref{thm:dual} applies.  
\end{proof}

The degeneracy condition is therefore a property of the collective input geometry, not of the individual input sets being degenerate. Even when each
$\W_i$ is full-dimensional in its own input space, underactuation reduces the dimension of the codomain of $\boldsymbol{A}$, making the condition
$0\in\conv\{\boldsymbol{A}\boldsymbol{e}_k:k\in E\}$ easier to satisfy. Proposition~\ref{prop:ring} exhibits such a state for the agents of
Section~\ref{sec:sim}.

\subsection{Conflict Localization}

This subsection derives three consequences of the certificate, each obtained from the multiplier $\boldsymbol{\lambda}^\star$.


\subsubsection{Marginal Value of Actuation}

Let each agent's reachable set scale independently, $\Um_i^{\rho_i}=\bo_i+\rho_i\W_i$
with $\rho_i>0$, and write $M(\boldsymbol{x};\boldsymbol{\rho})$ for the reserve of the scaled family.

\begin{proposition}\label{prop:leverage}
If the minimizer $\boldsymbol{\lambda}^\star$ is unique, then
\begin{equation}\label{eq:leverage}
\frac{\partial M}{\partial\rho_i}(\boldsymbol{x};\boldsymbol{\rho})
=
\sigma_{\W_i}\!\big(\boldsymbol{\nu}_i(\boldsymbol{\lambda}^\star)\big)
\ge 0.
\end{equation}
In general, when the minimizer is not unique, the Clarke subdifferential~\cite{clarke1990optimization} of $M$ with respect to $\rho_i$ is the convex hull of
$\sigma_{\W_i}\!\big(\boldsymbol{\nu}_i(\boldsymbol{\lambda})\big)$ over all minimizers $\boldsymbol{\lambda}$.
\end{proposition}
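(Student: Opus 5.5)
The plan is to write $M(\boldsymbol{x};\boldsymbol{\rho})$ as the minimum over the compact simplex $\Deltasx$ of a family of functions that are affine in $\boldsymbol{\rho}$, and then apply Danskin's theorem. By Lemma~\ref{lem:offset} applied to $\Um_i^{\rho_i}=\bo_i+\rho_i\W_i$, and by positive homogeneity of support functions ($\sigma_{\rho_i\W_i}=\rho_i\sigma_{\W_i}$), we get
\begin{equation*}
M(\boldsymbol{x};\boldsymbol{\rho})=\min_{\boldsymbol{\lambda}\in\Deltasx}\Phi(\boldsymbol{\lambda},\boldsymbol{\rho}),
\qquad
\Phi(\boldsymbol{\lambda},\boldsymbol{\rho}):=\boldsymbol{\lambda}^{\top}\boldsymbol{b}'(\boldsymbol{x})+\sum_{i}\rho_i\,\sigma_{\W_i}\big(\boldsymbol{\nu}_i(\boldsymbol{\lambda})\big).
\end{equation*}
The offset $\boldsymbol{b}'$ does not depend on $\boldsymbol{\rho}$. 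This is exactly the observation already used in Corollary~\ref{cor:scale}.

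\textbf{Regularity of $\Phi$.} First, $\Phi$ is jointly continuous on $\Deltasx\times\R^N_{>0}$, because each $\sigma_{\W_i}$ is finite and convex, hence continuous. Second, for fixed $\boldsymbol{\lambda}$, $\Phi$ is affine in $\boldsymbol{\rho}$ with partial derivative $\partial\Phi/\partial\rho_i=\sigma_{\W_i}(\boldsymbol{\nu}_i(\boldsymbol{\lambda}))$, and this derivative is continuous in $\boldsymbol{\lambda}$. Consequently $M(\boldsymbol{x};\cdot)$ is a pointwise minimum of affine functions of $\boldsymbol{\rho}$, so it is concave and locally Lipschitz in $\boldsymbol{\rho}$.

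\textbf{Directional derivative.} Danskin's theorem for a minimum over a compact set gives
\begin{equation*}
M'(\boldsymbol{\rho};\boldsymbol{d})=\min_{\boldsymbol{\lambda}\in\Lambda^\star(\boldsymbol{\rho})}\sum_i d_i\,\sigma_{\W_i}\big(\boldsymbol{\nu}_i(\boldsymbol{\lambda})\big),
\end{equation*}
where $\Lambda^\star(\boldsymbol{\rho})$ is the (compact) set of minimizers. If the minimizer is unique, this expression is linear in $\boldsymbol{d}$. For a concave, locally Lipschitz function, linearity of the directional derivative means the function is differentiable at that point, which yields \eqref{eq:leverage}. Nonnegativity holds because $\W_i$ is symmetric with $0\in\W_i$, so $\sigma_{\W_i}\ge0$, as already noted in the proof of Theorem~\ref{thm:degen}.

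\textbf{General case.} For the non-unique case, use that $-M$ is convex and finite near $\boldsymbol{\rho}$, so its Clarke subdifferential coincides with its convex subdifferential. For the pointwise maximum $-M=\max_{\boldsymbol{\lambda}}(-\Phi)$, the standard max-rule (Clarke, Thm.~2.8.2; equivalently the Danskin/Valadier formula) gives
\begin{equation*}
\partial(-M)=\conv\{-\nabla_{\boldsymbol{\rho}}\Phi(\boldsymbol{\lambda},\boldsymbol{\rho}):\boldsymbol{\lambda}\in\Lambda^\star\}.
\end{equation*}
Negating, and using $\partial_C(-f)=-\partial_C f$, gives the convex hull of the vectors $\big(\sigma_{\W_i}(\boldsymbol{\nu}_i(\boldsymbol{\lambda}))\big)_i$ over the minimizers. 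Projecting onto coordinate $i$ (i.e., taking the partial Clarke subdifferential in $\rho_i$ with the other $\rho_j$ fixed, where the same argument applies in one variable) gives the stated convex hull of scalars.

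\textbf{Main obstacle.} The step needing the most care is justifying Danskin's theorem and the max-rule: we need compactness of $\Deltasx$, joint continuity of $\Phi$ and of $\nabla_{\boldsymbol{\rho}}\Phi$, and the fact that the minimizer set is closed. All three follow from the continuity of the support functions, so we do not expect a genuine difficulty there. The one subtle point is that the closedness of the convex hull requires $\Lambda^\star$ to be compact. This holds because $\Lambda^\star$ is a closed subset of $\Deltasx$.
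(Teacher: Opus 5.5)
Your proposal is correct and follows the paper's route: both use Lemma~\ref{lem:offset} to write the reserve as a minimum over the compact simplex of functions affine in $\boldsymbol{\rho}$, apply Danskin's theorem to obtain $\sigma_{\W_i}(\boldsymbol{\nu}_i(\boldsymbol{\lambda}^\star))$, and get nonnegativity from $0\in\W_i$. You go further than the paper's proof by also proving the non-unique case, using convexity of $-M$ and the max-rule (with the one-variable argument in $\rho_i$). The paper states that case but does not argue it.
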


\begin{proof}
By Lemma~\ref{lem:offset}, $\phi(\boldsymbol{x},\boldsymbol{\lambda};\boldsymbol{\rho})=\boldsymbol{\lambda}^\top\boldsymbol{b}'
+\sum_i\rho_i\,\sigma_{\W_i}(\boldsymbol{\nu}_i(\boldsymbol{\lambda}))$, in which the offsets enter only
through $\boldsymbol{b}'$ and are independent of $\boldsymbol{\rho}$. So $\phi$ is affine, hence
smooth, in $\boldsymbol{\rho}$, and continuous in $(\boldsymbol{\lambda},\boldsymbol{\rho})$ on the compact set
$\Deltasx$. Danskin's theorem \cite{danskin1966} gives
$\nabla_{\boldsymbol{\rho}}M=\nabla_{\boldsymbol{\rho}}\phi(\boldsymbol{x},\boldsymbol{\lambda}^\star;\boldsymbol{\rho})$, whose $i$-th entry is
$\sigma_{\W_i}(\boldsymbol{\nu}_i (\boldsymbol{\lambda}^\star))$. Nonnegativity follows from $0\in\W_i$.  
\end{proof}

The marginal value of actuation for agent $i$ is the support function of its reachable set evaluated at the aggregate normal at the dual optimum $\boldsymbol{\lambda}^{\star}$. This value is zero for two distinct reasons: the incident normals may cancel, or their aggregate direction may lie outside the agent's actuation directions. In either case, increasing that agent's actuation cannot improve feasibility.

\subsubsection{Sparsity of the Conflict Certificate}

\begin{proposition}\label{prop:sparse}
There exists a minimizer $\boldsymbol{\lambda}^\star$ of $M(\boldsymbol{x})$ with $\lvert\operatorname{supp}\boldsymbol{\lambda}^\star\rvert\ \le\
\operatorname{rank}\!\begin{bmatrix}\boldsymbol{A}\\ \mathbf 1^\top\end{bmatrix}\ \le\
\sum\nolimits_i m_i+1,$ irrespective of $|E|$, where $\boldsymbol{A}\boldsymbol{\lambda}=(\boldsymbol{\nu}_i(\boldsymbol{\lambda}))_i$.
\end{proposition}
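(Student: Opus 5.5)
Take any minimizer $\bar{\boldsymbol{\lambda}}\in\Deltasx$ of $\phi(\boldsymbol{\lambda})=\boldsymbol{\lambda}^\top\boldsymbol{b}+\sum_i\sigma_{\Um_i}(\boldsymbol{\nu}_i(\boldsymbol{\lambda}))$; one exists by Theorem~\ref{thm:dual}. The objective depends on $\boldsymbol{\lambda}$ only through the linear term $\boldsymbol{\lambda}^\top\boldsymbol{b}$ and the vector $\boldsymbol{A}\boldsymbol{\lambda}$, since $\boldsymbol{G}^\top\boldsymbol{\lambda}=\boldsymbol{A}\boldsymbol{\lambda}$ blockwise. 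The idea is to freeze $\boldsymbol{A}\boldsymbol{\lambda}$ and the simplex normalization, and to minimize the remaining linear term over what is left. This becomes a linear program whose basic optimal solutions are sparse.

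Concretely, consider the polytope
\[
P:=\{\boldsymbol{\lambda}\in\R^K_{\ge0}:\ \boldsymbol{A}\boldsymbol{\lambda}=\boldsymbol{A}\bar{\boldsymbol{\lambda}},\ \mathbf 1^\top\boldsymbol{\lambda}=1\}.
\]
It is nonempty because it contains $\bar{\boldsymbol{\lambda}}$, and it is bounded because it lies in $\Deltasx$. For every $\boldsymbol{\lambda}\in P$ the supply term equals that at $\bar{\boldsymbol{\lambda}}$, so $\phi(\boldsymbol{\lambda})=\boldsymbol{\lambda}^\top\boldsymbol{b}+\text{const}$. Minimize the linear function $\boldsymbol{\lambda}^\top\boldsymbol{b}$ over $P$. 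The minimum is attained at a vertex $\boldsymbol{\lambda}^\star$ of $P$ by the fundamental theorem of linear programming, and $\phi(\boldsymbol{\lambda}^\star)\le\phi(\bar{\boldsymbol{\lambda}})=M(\boldsymbol{x})$. Since $\boldsymbol{\lambda}^\star\in\Deltasx$, this gives $\phi(\boldsymbol{\lambda}^\star)=M(\boldsymbol{x})$, so $\boldsymbol{\lambda}^\star$ is again a minimizer.

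Next comes the support bound, via the standard characterization of vertices of a standard-form polyhedron $\{\boldsymbol{\lambda}\ge0: B\boldsymbol{\lambda}=c\}$ with $B=\begin{bmatrix}\boldsymbol{A}\\ \mathbf 1^\top\end{bmatrix}$. A feasible point is a vertex if and only if the columns of $B$ indexed by its support are linearly independent. The argument is the usual one. Suppose the columns $B\boldsymbol{e}_k$, $k\in\operatorname{supp}\boldsymbol{\lambda}^\star$, are dependent, with $B\boldsymbol{d}=0$, $\boldsymbol{d}\neq0$ and $\operatorname{supp}\boldsymbol{d}\subseteq\operatorname{supp}\boldsymbol{\lambda}^\star$. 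Then $\boldsymbol{\lambda}^\star\pm\epsilon\boldsymbol{d}\in P$ for small $\epsilon>0$, which contradicts extremality. Hence $|\operatorname{supp}\boldsymbol{\lambda}^\star|\le\operatorname{rank}B$. Since $B$ has $\sum_i m_i+1$ rows, $\operatorname{rank}B\le\sum_i m_i+1$. Neither bound involves $K=|E|$.

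I expect no serious obstacle; the one point needing care is that replacing $\bar{\boldsymbol{\lambda}}$ by a vertex does not change the nonlinear supply term, which holds exactly because $\boldsymbol{A}\boldsymbol{\lambda}$ is fixed on $P$. An alternative I would keep in reserve is a Carathéodory argument. Write $(\boldsymbol{A}\bar{\boldsymbol{\lambda}},1,\bar{\boldsymbol{\lambda}}^\top\boldsymbol{b})$ as a conic combination of the lifted columns $(\boldsymbol{A}\boldsymbol{e}_k,1,b_k)$, then prune while not increasing the last coordinate. The LP-vertex route is cleaner and is the one I would write out.
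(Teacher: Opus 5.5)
Your proof is correct and is essentially the paper's argument. Both rest on $\phi$ being affine on $\{\boldsymbol{\lambda}\ge0:\boldsymbol{A}\boldsymbol{\lambda}=\boldsymbol{A}\bar{\boldsymbol{\lambda}},\ \mathbf 1^\top\boldsymbol{\lambda}=1\}$, and on eliminating a kernel direction $\boldsymbol{z}$ of $[\boldsymbol{A};\mathbf 1^\top]$ supported on the current support. The paper does this directly on a minimum-support minimizer, using minimality for both signs of $t$ to force $\boldsymbol{b}^\top\boldsymbol{z}=0$ before sliding until a coordinate vanishes. You instead package the same reduction as a linear program over that fiber plus the vertex characterization, where extremality replaces the $\boldsymbol{b}^\top\boldsymbol{z}=0$ step, which is slightly cleaner.
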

 
\begin{proof}
A minimum-support minimizer $\boldsymbol{\lambda}^\star$ exists, the argmin set being
nonempty by Theorem~\ref{thm:dual}; let
$T=\operatorname{supp}(\boldsymbol{\lambda}^\star)$. If $|T|>\operatorname{rank}[\boldsymbol{A};\mathbf 1^\top]$, there exists $\boldsymbol{z}\neq0$ supported on $T$ with $\boldsymbol{A}\boldsymbol{z}=0$ and $\mathbf 1^\top\boldsymbol{z}=0$. Since $\boldsymbol{\lambda}^\star>0$ on $T$, $\boldsymbol{\lambda}^\star+t\boldsymbol{z}\in\Deltasx$ for sufficiently small $|t|$. Moreover, $\boldsymbol{A}\boldsymbol{z}=0$ gives $\phi(\boldsymbol{\lambda}^\star+t\boldsymbol{z})=\phi(\boldsymbol{\lambda}^\star)+t\boldsymbol{b}^\top\boldsymbol{z}$. Minimality for both signs of $t$ implies $\boldsymbol{b}^\top\boldsymbol{z}=0$, so $\phi$ is constant along the feasible line. Since $\mathbf 1^\top\boldsymbol{z}=0$ with $\boldsymbol{z}\neq0$, its entries take both
signs, so moving along this line reaches a point where a support coordinate
vanishes, giving a minimizer of strictly smaller support and contradicting the
choice of $\boldsymbol{\lambda}^\star$. Hence
$|T|\le\operatorname{rank}[\boldsymbol{A};\mathbf 1^\top]$. The second inequality counts rows:
$[\boldsymbol{A};\mathbf 1^\top]$ has $\sum_i m_i+1$ of them, so its rank cannot
exceed that.
\end{proof}

Thus, the worst irreducible conflict contains at most one more constraint than the actuated degrees of freedom, making the bound dimensional rather than combinatorial.


\section{Certificate-Optimal Responsibility Allocation}
\label{sec:alloc}

A decentralized filter must divide each shared constraint among the agents it couples. Existing choices are heuristic
\cite{wang2017safety,cahcbf2026,auction2025,miralloc2026}: they guarantee the individual pairwise conditions but not the feasibility of the local programs
that result when one agent carries several constraints, a gap \cite{cahcbf2026} patches with a clipping safeguard, and \cite{miralloc2026}
assumes away. The proposed certificate prices any division exactly, so the best one follows from an optimization rather than a rule.

The first step is that the certificate separates across agents: if each agent can meet its own assigned share, the joint filter is feasible, and each agent
can verify this using only its own constraints and its own input set.

\begin{theorem}\label{thm:local}
Choose weights $\theta_{k,i}\ge0$ with $\sum_{i\in S_k}\theta_{k,i}=1$, and set
for $\boldsymbol{\mu}\in\Deltasx(E_i)$, $\phi_i(\boldsymbol{\mu})=\sum_{k\in E_i}\boldsymbol{\mu}_k\theta_{k,i}b_k
+\sigma_{\Um_i}\Big(\sum_{k\in E_i}\boldsymbol{\mu}_k\boldsymbol{\nu}_{k,i}\Big),\;
m_i^{\mathrm{loc}}=\min_{\boldsymbol{\mu}}\phi_i(\boldsymbol{\mu}).$ If $m_i^{\mathrm{loc}}(\boldsymbol{x})\ge0$ for every $i$, then $M(\boldsymbol{x})\ge0$ and the filter is
feasible at $\boldsymbol{x}$. Each $\phi_i$ depends only on the constraints incident to agent $i$ and on $\Um_i$.
\end{theorem}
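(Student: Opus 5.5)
The plan is to argue primally through Theorem~\ref{thm:dual} applied agent by agent, and then assemble a joint input. For each agent $i$, the local function $\phi_i$ is exactly the reserve of a single-agent filter. That filter has input set $\Um_i$ and constraints $\boldsymbol{G}_{k,i}\boldsymbol{u}_i\ge-\theta_{k,i}b_k$ for $k\in E_i$, with normals $\boldsymbol{\nu}_{k,i}$ and demands $\theta_{k,i}b_k$. Assumption~\ref{as:sets} holds for $\Um_i$, so Theorem~\ref{thm:dual} with $K=|E_i|$ and $N=1$ applies. If $E_i=\emptyset$ the agent carries no constraint and any $\boldsymbol{u}_i\in\Um_i$ serves. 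Otherwise $m_i^{\mathrm{loc}}\ge0$ yields some $\boldsymbol{u}_i\in\Um_i$ with $\boldsymbol{G}_{k,i}\boldsymbol{u}_i\ge-\theta_{k,i}b_k$ for every $k\in E_i$.

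Next I stack these inputs into $\boldsymbol{u}=(\boldsymbol{u}_1,\dots,\boldsymbol{u}_N)$, which lies in $\Um=\prod_i\Um_i$. For each constraint $k$, only agents $i\in S_k$ contribute, since $\boldsymbol{G}_{k,i}\equiv0$ otherwise. Each such $i$ has $k\in E_i$, so summing the local inequalities and using $\sum_{i\in S_k}\theta_{k,i}=1$ gives
$\sum_{i\in S_k}\boldsymbol{G}_{k,i}\boldsymbol{u}_i\ge-\sum_{i\in S_k}\theta_{k,i}b_k=-b_k.$
This is \eqref{eq:cbf_constraint}, so $\boldsymbol{x}\in\F$, and Theorem~\ref{thm:dual} then gives $M(\boldsymbol{x})\ge0$. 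The locality claim holds by inspection: $\phi_i$ involves only $b_k,\boldsymbol{\nu}_{k,i}$ for $k\in E_i$, the weights $\theta_{k,i}$, and $\sigma_{\Um_i}$.

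As a cross-check, there is also a direct dual route. For $\boldsymbol{\lambda}\in\Deltasx$, let $c_i=\sum_{k\in E_i}\lambda_k$. When $c_i>0$, set $\boldsymbol{\mu}=\boldsymbol{\lambda}|_{E_i}/c_i$. Positive homogeneity of $\sigma_{\Um_i}$ gives $c_i\phi_i(\boldsymbol{\mu})=\sum_{k\in E_i}\lambda_k\theta_{k,i}b_k+\sigma_{\Um_i}(\boldsymbol{\nu}_i(\boldsymbol{\lambda}))$. When $c_i=0$, the support term is $\sigma_{\Um_i}(0)=0$. Summing over $i$ and exchanging sums recovers $\boldsymbol{\lambda}^\top\boldsymbol{b}+\sum_i\sigma_{\Um_i}(\boldsymbol{\nu}_i(\boldsymbol{\lambda}))\ge\sum_i c_i m_i^{\mathrm{loc}}\ge0$.

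I expect the main obstacle to be bookkeeping, not substance. The delicate points are the zero-weight and empty-$E_i$ cases and the fact that $\theta_{k,i}$ scales only the demand, not the normal. The primal argument avoids homogeneity issues entirely, so I would present that one.
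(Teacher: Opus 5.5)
Your proof is correct. Your ``cross-check'' is essentially the paper's own argument, but the primal route you propose to present is genuinely different.

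The paper stays on the dual side. Reindexing $\boldsymbol{\lambda}^\top\boldsymbol{b}=\sum_i\sum_{k\in E_i}\lambda_k\theta_{k,i}b_k$ splits $\phi(\boldsymbol{\lambda})=\sum_i\phi_i(\boldsymbol{\lambda}_{E_i})$. Positive homogeneity with $w_i=\sum_{k\in E_i}\lambda_k$ then gives $\phi(\boldsymbol{\lambda})\ge\sum_i w_i m_i^{\mathrm{loc}}\ge0$. Only at the end is Theorem~\ref{thm:dual} used, to turn $M\ge0$ into feasibility.

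Your primal route calls Theorem~\ref{thm:dual} twice. First, per agent, in its separation direction, to extract a local witness $\boldsymbol{u}_i$. Second, globally, in its easy direction, to go from $\boldsymbol{x}\in\F$ back to $M\ge0$. The summation step in between is exactly the proof of Proposition~\ref{prop:allocsafe}. You are tacitly reading Theorem~\ref{thm:dual} as a statement about an arbitrary affine system over a compact convex input set, not the chain-derived $(\boldsymbol{G},\boldsymbol{b})$. Its proof supports that reading, but you should say so explicitly.

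Each route buys something different. Yours produces an explicit joint input and shows that Theorem~\ref{thm:local} is a corollary of Theorem~\ref{thm:dual} plus Proposition~\ref{prop:allocsafe}. The local implication you extract, $m_i^{\mathrm{loc}}\ge0\Rightarrow$ local program feasible, is what the paper later re-derives via Sion in Lemma~\ref{lem:localexact}.

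The paper's route reaches $M\ge0$ using nothing beyond reindexing and positive homogeneity, and it is quantitative. Since each $S_k\neq\emptyset$, we have $\sum_i w_i=\sum_k\lambda_k|S_k|\ge1$. The same inequality therefore yields $M(\boldsymbol{x})\ge\min_i m_i^{\mathrm{loc}}(\boldsymbol{x})$ whenever that minimum is nonnegative. So the size of the local margins transfers to the joint reserve, not just their sign.
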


\begin{proof}
Splitting $\boldsymbol{b}$ and using
$\sum_{i\in S_k}\theta_{k,i}=1$ gives $\boldsymbol{\lambda}^\top\boldsymbol{b}
=\sum_k\lambda_k b_k\sum_{i\in S_k}\theta_{k,i}
=\sum_i\sum_{k\in E_i}\lambda_k\theta_{k,i}b_k,$
so the objective separates as
$\phi(\boldsymbol{\lambda})=\sum_i\phi_i(\boldsymbol{\lambda}_{E_i})$.
Each $\phi_i$ is positively homogeneous. For
$\boldsymbol{\lambda}\in\Delta$, let
$w_i=\sum_{k\in E_i}\lambda_k$. If $w_i>0$, then
$\boldsymbol{\lambda}_{E_i}/w_i\in\Delta(E_i)$ and
$\phi_i(\boldsymbol{\lambda}_{E_i})
=w_i\phi_i\!\left({\boldsymbol{\lambda}_{E_i}}/{w_i}\right)
\ge w_i m_i^{\mathrm{loc}}\ge0.$
If $w_i=0$, then $\phi_i(\boldsymbol{\lambda}_{E_i})=0$. Hence
$\phi(\boldsymbol{\lambda})\ge0$ for every $\boldsymbol{\lambda}\in\Delta$,
and therefore $M(\boldsymbol{x})\ge0$.  
\end{proof}

\begin{proposition}\label{prop:allocsafe}
Let $\boldsymbol{\theta}$ satisfy $\theta_{k,i}\ge0$ and $\sum_{i\in S_k}\theta_{k,i}=1$ for every $k$, and suppose each agent applies an input $\boldsymbol{u}_i\in\Um_i$ meeting its own share,
\begin{equation}\label{eq:localcon}
\theta_{k,i}b_k(\boldsymbol{x})+\boldsymbol{G}_{k,i}(\boldsymbol{x})\boldsymbol{u}_i\ \ge\ 0,
\qquad k\in E_i .
\end{equation}
Then $\psi_{k,r_k}(\boldsymbol{x},\boldsymbol{u})\ge0$ for every $k\in E$. Consequently the HOCBF forward-invariance argument \cite{xiao2022hocbf} applies for \emph{every} admissible $\boldsymbol{\theta}$, including one recomputed at each state.
\end{proposition}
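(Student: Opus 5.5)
The plan is to sum the local inequalities \eqref{eq:localcon} over the agents in each constraint's support and then identify the sum with the final stage of the HOCBF chain. Fix a constraint $k\in E$. For every $i\in S_k$ we have $k\in E_i$ by the definition of $E_i$, so \eqref{eq:localcon} holds for that pair $(k,i)$. Summing these inequalities over $i\in S_k$ and using $\sum_{i\in S_k}\theta_{k,i}=1$ gives
\[
b_k(\boldsymbol{x})+\sum_{i\in S_k}\boldsymbol{G}_{k,i}(\boldsymbol{x})\boldsymbol{u}_i\ \ge\ 0,
\]
which is exactly \eqref{eq:cbf_constraint}. Because $\boldsymbol{G}_{k,i}\equiv0$ for $i\notin S_k$, the sum can be extended to all agents, so the stacked inequality $\boldsymbol{G}\boldsymbol{u}\ge-\boldsymbol{b}$ holds at the applied input $\boldsymbol{u}=(\boldsymbol{u}_1,\dots,\boldsymbol{u}_N)\in\Um$.

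The second step links this to $\psi_{k,r_k}$. By the chain definition, $\psi_{k,r_k}=\dot\psi_{k,r_k-1}+\alpha_{k,r_k}(\psi_{k,r_k-1})$. Expanding $\dot\psi_{k,r_k-1}$ along the control-affine dynamics gives $\Lf\psi_{k,r_k-1}+\sum_i\Lg{i}\psi_{k,r_k-1}\boldsymbol{u}_i$. By \eqref{eq:Gb} this equals $b_k+\sum_{i\in S_k}\boldsymbol{G}_{k,i}\boldsymbol{u}_i$. Hence $\psi_{k,r_k}(\boldsymbol{x},\boldsymbol{u})\ge0$ for every $k$.

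For the consequence, the forward-invariance theorem of \cite{xiao2022hocbf} needs only one thing: at each state of $\C_\alpha$, the applied input satisfies $\psi_{k,r_k}\ge0$ for all $k$. It does not care how that input was produced. The argument above used no property of $\boldsymbol{\theta}$ beyond nonnegativity and summing to one on each $S_k$, and it is pointwise in $\boldsymbol{x}$. So a state-dependent $\boldsymbol{\theta}(\boldsymbol{x})$ works equally well.

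The only delicate point is regularity. The closed-loop input must be regular enough, for example locally Lipschitz or at least giving well-defined solutions, for the comparison argument in \cite{xiao2022hocbf} to apply when $\boldsymbol{\theta}$ varies with the state. I would state that this regularity is inherited from whatever regularity the cited result already requires of the controller, since $\boldsymbol{\theta}$ enters only through the choice of $\boldsymbol{u}$. This is the step I expect to need the most care, though the algebraic core is immediate.
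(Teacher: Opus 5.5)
Your proposal is correct and matches the paper's proof: you sum the local share inequalities over $i\in S_k$ and use $\sum_{i\in S_k}\theta_{k,i}=1$ to recover $b_k+\sum_{i\in S_k}\boldsymbol{G}_{k,i}\boldsymbol{u}_i=\psi_{k,r_k}\ge0$, and the pointwise nature of that step is what lets $\boldsymbol{\theta}$ depend on the state. Your regularity caveat, that closed-loop solutions must be well-posed when $\boldsymbol{\theta}$ is recomputed from a possibly non-unique or discontinuous LP solution, is a point the paper's proof does not address; it is extra care on your part, not a gap.
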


\begin{proof}
Summing \eqref{eq:localcon} over $i\in S_k$ and using
$\sum_{i\in S_k}\theta_{k,i}=1$ gives $b_k+\sum_{i\in S_k}\boldsymbol{G}_{k,i}\boldsymbol{u}_i\ge0$, which is exactly $\psi_{k,r_k}\ge0$. The argument is pointwise in $\boldsymbol{x}$ and does not involve
$\boldsymbol{\theta}$ beyond the normalization, so it is unaffected by any dependence of
$\boldsymbol{\theta}$ on the state or on time.  
\end{proof}

Proposition~\ref{prop:allocsafe} separates safety from local feasibility: any valid allocation preserves the barrier condition, so $\theta$ can be optimized without compromising safety. Its effect is instead on whether the local programs \eqref{eq:localcon} remain feasible, which we address next.

The uniform choice $\theta_{k,i}=1/|S_k|$ recovers the standard decentralized CBF condition of \cite{wang2017safety}, used as the hand-crafted baseline in \cite{zhang2025gcbfplus}. Theorem~\ref{thm:local} shows that this is one choice within a broader family, making $\theta$ a design variable. The per-agent quantity in Theorem~\ref{thm:local} is not merely a bound. It is exactly the largest worst-case slack agent $i$ can reach with its own input.

\begin{lemma}\label{lem:localexact}
For $\boldsymbol{\theta}$ admissible and $\Um_i$ convex compact,
\begin{equation}\label{eq:localmargin}
m_i^{\mathrm{loc}}(\boldsymbol{x};\boldsymbol{\theta})
=\max_{\boldsymbol{u}_i\in\Um_i}\ \min_{k\in E_i}\
\big(\theta_{k,i}b_k(\boldsymbol{x})+\boldsymbol{G}_{k,i}(\boldsymbol{x})\boldsymbol{u}_i\big),
\end{equation}
so $m_i^{\mathrm{loc}}\ge0$ if and only if agent $i$'s local program admits a
solution.
\end{lemma}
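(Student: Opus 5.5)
The identity is a minimax exchange, and I will prove it with Sion's theorem on the two compact convex sets $\Um_i$ and $\Deltasx(E_i)$. The key step is to rewrite the inner minimum over the finite index set $E_i$ as a minimum over the simplex. For fixed $\boldsymbol{u}_i$ the map $k\mapsto \theta_{k,i}b_k+\boldsymbol{G}_{k,i}\boldsymbol{u}_i$ has a finite minimum. That minimum equals the minimum of the linear function
\[
L(\boldsymbol{u}_i,\boldsymbol{\mu}) := \sum_{k\in E_i}\mu_k\big(\theta_{k,i}b_k+\boldsymbol{G}_{k,i}\boldsymbol{u}_i\big)
\]
over $\boldsymbol{\mu}\in\Deltasx(E_i)$, because a linear function on a simplex attains its minimum at a vertex. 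The right-hand side of \eqref{eq:localmargin} therefore becomes $\max_{\boldsymbol{u}_i\in\Um_i}\min_{\boldsymbol{\mu}\in\Deltasx(E_i)}L$. Here I assume $E_i\neq\emptyset$; otherwise both sides are the convention $+\infty$ or the statement is vacuous, and I will note this.

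Next I check the hypotheses of Sion's theorem. $L$ is bilinear, so it is affine, hence concave and continuous, in $\boldsymbol{u}_i$. It is likewise affine, hence convex and continuous, in $\boldsymbol{\mu}$. Both $\Um_i$ and $\Deltasx(E_i)$ are convex and compact; for $\Um_i$ this is Assumption~\ref{as:sets}. Sion then gives
\[
\max_{\boldsymbol{u}_i\in\Um_i}\min_{\boldsymbol{\mu}}L=\min_{\boldsymbol{\mu}}\max_{\boldsymbol{u}_i\in\Um_i}L .
\]
Compactness together with continuity guarantees that the max and min are attained. For fixed $\boldsymbol{\mu}$ the inner maximum is
\[
\sum_k\mu_k\theta_{k,i}b_k+\max_{\boldsymbol{u}_i\in\Um_i}\Big\langle\sum_k\mu_k\boldsymbol{\nu}_{k,i},\boldsymbol{u}_i\Big\rangle ,
\]
which equals $\phi_i(\boldsymbol{\mu})$ by the definition of the support function and $\boldsymbol{\nu}_{k,i}=\boldsymbol{G}_{k,i}^\top$. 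The min over $\boldsymbol{\mu}$ is then exactly $m_i^{\mathrm{loc}}$, which proves \eqref{eq:localmargin}.

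For the final equivalence, suppose $m_i^{\mathrm{loc}}\ge0$. Since the outer max is attained, there is a $\boldsymbol{u}_i\in\Um_i$ with every local constraint in \eqref{eq:localcon} nonnegative, i.e., the local program admits a solution. Conversely, any feasible $\boldsymbol{u}_i$ makes the inner min nonnegative, so the max is nonnegative.

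The main obstacle is only the minimax exchange. If one prefers to avoid citing Sion, I would reproduce the separation argument of Theorem~\ref{thm:dual} on the compact convex set $\{(\theta_{k,i}b_k+\boldsymbol{G}_{k,i}\boldsymbol{u}_i)_{k\in E_i} : \boldsymbol{u}_i\in\Um_i\}$ shifted by $t\mathbf 1$. That argument yields the equality for every threshold $t$, rather than only the sign test.
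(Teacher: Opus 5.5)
Your proof is correct and follows essentially the same route as the paper: you write the local objective as the bilinear form $\sum_{k\in E_i}\mu_k(\theta_{k,i}b_k+\boldsymbol{G}_{k,i}\boldsymbol{u}_i)$ on $\Deltasx(E_i)\times\Um_i$, exchange min and max by Sion's theorem, identify the inner maximum as the support function, and reduce the minimum over the simplex to $\min_{k\in E_i}$ via the vertex property. Your explicit appeal to attainment of the outer maximum for the feasibility equivalence, and your remark on the case $E_i=\emptyset$, are small details that the paper leaves implicit.
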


\begin{proof}
Writing $\sigma_{\Um_i}$ as a maximum over $\Um_i$,
$\phi_i(\boldsymbol{\mu})=\max_{\boldsymbol{u}_i\in\Um_i}\sum_{k\in E_i}\mu_k(\theta_{k,i} b_k+\boldsymbol{G}_{k,i}\boldsymbol{u}_i)$.
The expression is bilinear in $(\boldsymbol{\mu}, \boldsymbol{u}_i)$ and both $\Deltasx(E_i)$ and
$\Um_i$ are convex and compact, so Sion's theorem \cite{sion1958} exchanges the minimum and the maximum. For fixed $\boldsymbol{u}_i$, the inner objective is linear in $\boldsymbol{\mu}$ and its minimum over the simplex is attained at a vertex, giving
$\min_{k\in E_i}$. The stated equivalence is then immediate, since agent $i$'s program is exactly the requirement that some $\boldsymbol{u}_i\in\Um_i$
satisfy every $\theta_{k,i} b_k+\boldsymbol{G}_{k,i} \boldsymbol{u}_i\ge0$.  
\end{proof}

Lemma~\ref{lem:localexact} sharpens Theorem~\ref{thm:local} by identifying each agent's quantity as its exact feasibility margin. Thus, maximizing the minimum local margin $m_i^{\mathrm{loc}}$ is the natural allocation problem and is tractable. Maximizing the worst margin looks like a search over allocations. Dualizing the inner minimization collapses it to one linear program.

\begin{theorem}\label{thm:alloc}
Let each $\Um_i$ be a polytope. Then
\begin{equation}\label{eq:allocLP}
\max_{\boldsymbol{\theta}}\ \min_i m_i^{\mathrm{loc}}
(\boldsymbol{x};\boldsymbol{\theta})
\end{equation}
equals the optimal value $t^\star$ of the linear program
\begin{equation}\label{eq:LP}
\begin{aligned}
\max_{t,\boldsymbol{\theta},\boldsymbol{u}}\quad & t\\
\text{s.t.}\quad
& t\le\theta_{k,i}b_k+\boldsymbol{G}_{k,i}\boldsymbol{u}_i, \quad k\in E,\ i\in S_k,\\
& \sum_{i\in S_k}\theta_{k,i}=1,\quad
\boldsymbol{\theta}\ge0,\quad
\boldsymbol{u}_i\in\Um_i .
\end{aligned}
\end{equation}
If $t^\star\ge0$, the resulting allocation makes all local programs feasible and, by Theorem~\ref{thm:local}, certifies $M(\boldsymbol{x})\ge0$. If $t^\star<0$, no admissible allocation can make all local programs feasible. The converse does not hold, i.e., $M(\boldsymbol{x})\ge0$
may still have $t^\star<0$, since a jointly feasible input need not admit a
feasible decomposition into local shares. Thus,
$M(\boldsymbol{x})\ge0>t^\star$ characterizes the feasibility gap introduced
by decentralization.
\end{theorem}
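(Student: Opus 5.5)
The proof will turn \eqref{eq:allocLP} into a single joint maximization using Lemma~\ref{lem:localexact}. For fixed admissible $\boldsymbol{\theta}$, that lemma gives $m_i^{\mathrm{loc}}(\boldsymbol{x};\boldsymbol{\theta})=\max_{\boldsymbol{u}_i\in\Um_i}\min_{k\in E_i}(\theta_{k,i}b_k+\boldsymbol{G}_{k,i}\boldsymbol{u}_i)$. Each agent's maximization involves only its own $\boldsymbol{u}_i$, and the sets $\Um_i$ form a product. Hence
\[
\min_i m_i^{\mathrm{loc}}(\boldsymbol{x};\boldsymbol{\theta})=\max_{\boldsymbol{u}\in\Um}\ \min_i\ \min_{k\in E_i}\big(\theta_{k,i}b_k+\boldsymbol{G}_{k,i}\boldsymbol{u}_i\big).
\]
To justify this, first take a maximizer $\boldsymbol{u}_i^\star$ for each agent (it exists by compactness). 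Stacking them attains the left-hand side, which gives ``$\ge$''. For ``$\le$'', note that any joint $\boldsymbol{u}$ gives each agent a value no larger than its own maximum.

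Maximizing over $\boldsymbol{\theta}$ then produces a single max over the feasible set $\{(\boldsymbol{\theta},\boldsymbol{u})\}$ of the min over pairs $(k,i)$ with $i\in S_k$. This is the same as the min over all $k\in E$, $i\in S_k$. Introducing the epigraph variable $t$ turns this max-min into the stated program, since $t\le\theta_{k,i}b_k+\boldsymbol{G}_{k,i}\boldsymbol{u}_i$ for all pairs exactly encodes $t\le$ the minimum. The constraints are linear in $(t,\boldsymbol{\theta},\boldsymbol{u})$: $b_k$ and $\boldsymbol{G}_{k,i}$ are fixed at the given $\boldsymbol{x}$, and the products $\theta_{k,i}b_k$ are linear in $\theta$. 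The polytope assumption makes $\boldsymbol{u}_i\in\Um_i$ a set of linear inequalities. So the program is an LP.

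The step I expect to need the most care is attainment: the LP optimum $t^\star$ must be achieved and must equal the max in \eqref{eq:allocLP}, not merely a supremum. The feasible set of $(\boldsymbol{\theta},\boldsymbol{u})$ is a product of simplices and polytopes, so it is compact. The objective $\min_{k,i}(\cdot)$ is continuous, so the maximum is attained. Also $t$ is bounded above and the LP is feasible, since $\boldsymbol{\theta}$ uniform and any $\boldsymbol{u}$ work. An agent with $E_i=\emptyset$ imposes no constraint, and I would adopt the convention $m_i^{\mathrm{loc}}=+\infty$ for it.

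The consequences then follow directly. If $t^\star\ge0$, the optimal $(\boldsymbol{\theta},\boldsymbol{u})$ has every $m_i^{\mathrm{loc}}\ge t^\star\ge0$. By Lemma~\ref{lem:localexact} the local programs are feasible, and Theorem~\ref{thm:local} gives $M(\boldsymbol{x})\ge0$. If $t^\star<0$, every admissible $\boldsymbol{\theta}$ has some $m_i^{\mathrm{loc}}<0$, so some local program is infeasible. For the failure of the converse, a small example suffices, because the LP fixes a single $\theta_{k,i}$ per constraint for all of that agent's constraints simultaneously. Take two agents with scalar inputs $u_1,u_2\in[-1,1]$ and two constraints, each involving both agents: $u_1+u_2\ge 0$ and $u_1-u_2\ge 0$. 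The input $\boldsymbol{u}=0$ satisfies both, so $M\ge0$. Now pick the $b_k$ and normals so that any split leaves one agent facing contradictory signed requirements. The cleanest version uses nonzero $b$ with opposing normals on agent 2 and a zero normal on agent 1, and I would tune the numbers to get $t^\star<0$.
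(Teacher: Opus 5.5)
Your derivation of the value identity, the LP form, and the two consequences for $t^\star\ge0$ and $t^\star<0$ is correct, and it is the paper's argument: substitute Lemma~\ref{lem:localexact}, merge all maxima into one over $(\boldsymbol{\theta},\boldsymbol{u})$, and add the epigraph variable. Your product-structure interchange and your compactness argument for attainment spell out steps the paper leaves implicit.

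The gap is the claim that the converse fails. The paper's proof only asserts it; you set out to exhibit it but do not. Your concrete instance ($u_1+u_2\ge0$, $u_1-u_2\ge0$, $\boldsymbol{b}=0$) has $t^\star=0$, attained at $u_2=0$, $u_1\ge0$, so it is not a counterexample. More generally, if $0\in\Um_i$ for all $i$ and $\boldsymbol{b}\ge0$, then $\boldsymbol{u}=0$ with any $\boldsymbol{\theta}$ gives $t^\star\ge0$. So some $b_k$ must be negative, not merely nonzero. Your template also fails if agent~1's normals vanish on both constraints. Setting $\theta_{k,1}=0$ then makes agent~2's local program identical to the joint filter, so $t^\star\ge0$ whenever $M(\boldsymbol{x})\ge0$. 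Under that reading of your template, ``tuning the numbers'' cannot help.

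The obstruction to exploit is the sign constraint $\boldsymbol{\theta}\ge0$, which your explanation does not mention. Fix $\boldsymbol{u}$. When $b_k\le0$, a valid split of constraint $k$ needs $\boldsymbol{G}_{k,i}\boldsymbol{u}_i\ge0$ for every $i\in S_k$. When $b_k>0$, it needs the negative contributions alone to be covered by $b_k$. In both cases one agent's surplus can never be credited to a partner's deficit. Your example must therefore force some agent to push against a shared constraint. The template does this if agent~1's normal vanishes on only one constraint and that constraint carries negative demand. Take $\Um_1=\Um_2=[-1,1]$, $S_1=S_2=\{1,2\}$, $(\boldsymbol{G}_{1,1},\boldsymbol{G}_{1,2})=(1,1)$, $(\boldsymbol{G}_{2,1},\boldsymbol{G}_{2,2})=(0,-1)$, and $\boldsymbol{b}=(0,-\tfrac12)$. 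Then $\boldsymbol{u}=(1,-\tfrac12)$ is jointly feasible (indeed $M(\boldsymbol{x})=\tfrac14$). But $t\ge0$ in \eqref{eq:LP} would force $\theta_{2,1}=0$, since agent~1's share of constraint~2 is $-\tfrac12\theta_{2,1}$. Hence $\theta_{2,2}=1$ and $u_2\le-\tfrac12$, whereas $b_1=0$ forces $u_2\ge0$. In fact $t^\star=-\tfrac16$, so $M(\boldsymbol{x})\ge0>t^\star$ as claimed.
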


\begin{proof}
Substitute \eqref{eq:localmargin} into \eqref{eq:allocLP}. The result is a
maximization over $\boldsymbol{\theta}$ and the per-agent inputs $\boldsymbol{u}_i$ of the
minimum over $(k,i)$ of terms affine in those variables. Introducing the
epigraph variable $t$ gives \eqref{eq:LP}. Its constraints are linear because
each $\Um_i$ is a polytope and $\theta_{k,i}b_k$ is linear in
$\boldsymbol{\theta}$. The consequences follow from Lemma~\ref{lem:localexact} and
Theorem~\ref{thm:local}.  
\end{proof}

\begin{algorithm}[t]
\small
\caption{Certificate-optimal responsibility allocation}
\label{alg:alloc}
\begin{algorithmic}[1]
\Require state $\boldsymbol{x}$; barriers $\{h_k\}$ with supports $\{S_k\}$; input sets
$\{\Um_i\}$; nominal inputs $\boldsymbol{u}^{\mathrm{nom}}$
\Ensure inputs $\boldsymbol{u}$ applied by the agents
\State form $\boldsymbol{G}(\boldsymbol{x})$, $\boldsymbol{b}(\boldsymbol{x})$ from the chains \eqref{eq:Gb}
       \Comment{Thm.~\ref{thm:invariance}}
\State $(t^\star,\boldsymbol{\theta}^\star,\boldsymbol{u}^\star)\gets$ solve \eqref{eq:LP}
       
\If{$t^\star\ge0$} 
  \State broadcast $\boldsymbol{\theta}^\star$; discard $\boldsymbol{u}^\star$
  \ForAll{agents $i$ \textbf{in parallel}} 
    \State $\boldsymbol{u}_i\gets\arg\min\,\lVert \boldsymbol{u}_i-\boldsymbol{u}^{\mathrm{nom}}_i\rVert^2$
           over $\boldsymbol{u}_i\in\Um_i$
    \State \hphantom{$\boldsymbol{u}_i\gets$}s.t.
           $\theta^\star_{k,i}b_k+\boldsymbol{G}_{k,i}\boldsymbol{u}_i\ge0$ for $k\in E_i$
  \EndFor
\Else \Comment{no split can work}
  \State $\boldsymbol{u}\gets\boldsymbol{u}^\star$
         \Comment{maximizes the worst margin over all agents}
  \State report $\operatorname{supp}\boldsymbol{\lambda}^\star$ and the agents with
         $\sigma_{\W_i}(\boldsymbol{\nu}_i(\boldsymbol{\lambda}^\star))>0$
\EndIf
\end{algorithmic}
\end{algorithm}

Algorithm~\ref{alg:alloc} is the control step that Theorem~\ref{thm:alloc}
yields. The linear program is solved on the joint state, and only the scalars $\boldsymbol{\theta}^\star$ are broadcast.
Each agent then solves a small program over its own input and its own constraints, so execution is local even though the allocation is not, as in
\cite{zhang2025gcbfplus,miralloc2026}.
 
The program also returns inputs $\boldsymbol{u}^\star$, which serve two purposes. When $t^\star\ge0$, we discard them, because they maximize the margin rather than
track the task; their only role is to show that a feasible input exists, and each agent instead picks the input closest to its nominal one that meets its
share. When $t^\star<0$ no split works, and $\boldsymbol{u}^\star$ is the input that keeps
the worst margin as high as possible. Applying it is then the least infeasible action available, at no extra cost.
 
Lemma~\ref{lem:localexact} holds for any convex compact $\Um_i$, but \eqref{eq:LP} is linear only when each $\Um_i$ is a polytope; otherwise, the same argument gives a convex program that is still solvable. 

\begin{figure*}[t]
\centering
\includegraphics[width=\textwidth]{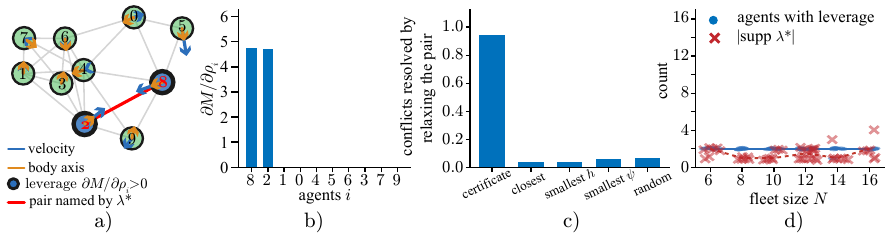}
\caption{Certificate-based conflict localization. (a) A representative conflict and the pair identified by $\operatorname{supp}\boldsymbol{\lambda}^\star$. 
(b) Agent-wise actuation leverage $\partial {M} / \partial {\rho_i}$. (c) Conflicts resolved by relaxing the identified pair versus alternative criteria. 
(d) Sparsity of $\boldsymbol{\lambda}^\star$ with fleet size.}
\label{fig:attribution}
\end{figure*}

\section{Numerical Results}
\label{sec:experiments}

We evaluate the effectiveness of the proposed attribution and allocation algorithms in closed-loop simulations.

\subsection{Simulation Setup}
\label{sec:sim}
\label{sec:setup}

\emph{Agents.} Each agent is a planar underactuated vehicle with state
$\boldsymbol{x}_i=(\boldsymbol{q}_i,\boldsymbol{v}_i,\phi_i)\in\R^5$, comprising position, velocity, and
heading, and a single scalar input $u_i$, so $m_i=1$:
\begin{equation}\label{eq:simdyn}
\dot{\boldsymbol{q}}_i=\boldsymbol{v}_i,\;
\dot{\boldsymbol{v}}_i=u_i\begin{bmatrix}\cos\phi_i & \sin\phi_i\end{bmatrix}^{\top},\;
\dot\phi_i=\omega_i .
\end{equation}
By \eqref{eq:simdyn} an agent accelerates only along its heading, which is itself a state and turns at a bounded rate, $|\omega_i|\le6$\,rad\,s$^{-1}$, with first-order tracking of a commanded heading at time constant $\tau=0.12$\,s. The input sets are the intervals $\Um_i=[-\bar u_i,\bar u_i]$, with $\bar u_i=1$ for all agents in the identical-fleet case and $\bar u_i$ drawn uniformly from $[0.5,2.0]$, a fourfold spread, in the heterogeneous case.

\emph{Constraints.} Every pair within a sensing radius of $2.4$\,m is enforced, giving $h_k=\norm{\boldsymbol{q}_i-\boldsymbol{q}_j}^2-(2r)^2$ with agent radius $r=0.3$\,m. Each $h_k$ has relative degree two, and the chain uses constant gains $\alpha_{k,1}=3h_k$ and $\alpha_{k,2}=3\psi_{k,1}$.

\emph{Task.} Agents start at random positions in a $5.2\times5.2$\,m square, separated by at least $0.75$\,m, with zero initial velocity and headings
pointing at their goals, where goals are drawn independently in the same square. Because start and goal assignments are independent, trajectories cross, and the
fleet is driven into dense conflict rather than a structured formation. The nominal controller is a PD law, $\boldsymbol{a}^{\mathrm{nom}}_i{=} {-}3(\boldsymbol{q}_i{-}\boldsymbol{q}^{\mathrm{goal}}_i)-3.6\boldsymbol{v}_i$. Each run is integrated at $50$\,Hz, with the filter and the allocation recomputed at every step.

\subsection{Localization of Conflicts}

We evaluate whether the dual multiplier identifies the interactions responsible for infeasibility. Across randomized tasks with $N=6,\ldots,16$ underactuated
agents, we obtain $52$ conflict events spanning $8$-$78$ enforced pairs. All events have positive supply and are therefore not degenerate (Theorem~\ref{thm:degen}). The dual support provides a finer diagnosis by identifying the interactions consuming the available actuation.

Fig.~\ref{fig:attribution}(a) shows a conflict with $N=10$ and $24$ enforced pairs. The dual support is a single pair, and relaxing it moves the reserve from $-10.02$ to $+0.72$, whereas relaxing the closest pair, the smallest
barrier value, or the smallest $\psi$ does not. Over all $52$ conflicts the pair named by $\operatorname{supp}\boldsymbol{\lambda}^\star$ restores feasibility in $94\%$ of cases against $4$-$6\%$ for those heuristics (Fig.~\ref{fig:attribution}(c)). The three failures admit a second certificate once the named pair is relaxed, so the attribution gives a sufficient cause, not a unique one. The support stays small as the fleet
grows, averaging $1.4$ pairs with exactly two agents carrying nonzero leverage \eqref{eq:leverage}, well inside the bound of Proposition~\ref{prop:sparse}
(Fig.~\ref{fig:attribution}(d)).

\begin{figure}[t]
\centering
\includegraphics[width=\columnwidth]{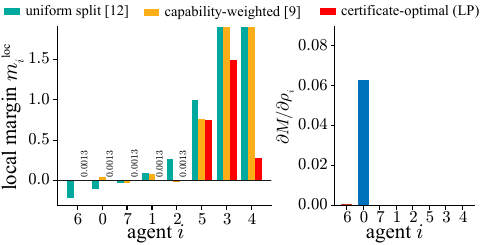}
\caption{Responsibility allocation for $N=8$ and $25$ enforced pairs. (a) Local feasibility margins under uniform~\cite{wang2017safety}, capability-weighted~\cite{cahcbf2026}, and certificate-optimal allocation (b) Agent-wise capability leverage $\partial M/\partial\rho_i$.}
\label{fig:allocmech}
\end{figure}
\subsection{Allocation Mechanism}

Figure~\ref{fig:allocmech} illustrates the allocation mechanism at a fixed state. For $N=8$ and $25$ enforced pairs, three agents have no admissible input
under the uniform split and two under the capability-weighted split. The proposed optimizer redistributes responsibility from agents with slack to
those with negative margins, increasing the worst local margin from $-0.211$ to $+0.001$, so that all local programs become feasible. This illustrates the
max-min objective in \eqref{eq:LP}. Fig.~\ref{fig:allocmech}(b) shows the
leverage $\partial M/\partial\rho_i$ at the same state. It is identical under all three allocations, because the joint certificate does not depend on $\boldsymbol{\theta}$, and it is nonzero for a single agent. The allocation, therefore, changes neither the fleet's capability nor which agent holds it, only whether
the local programs can use it.

\begin{figure*}[t]
\centering
\includegraphics[width=0.95\textwidth]{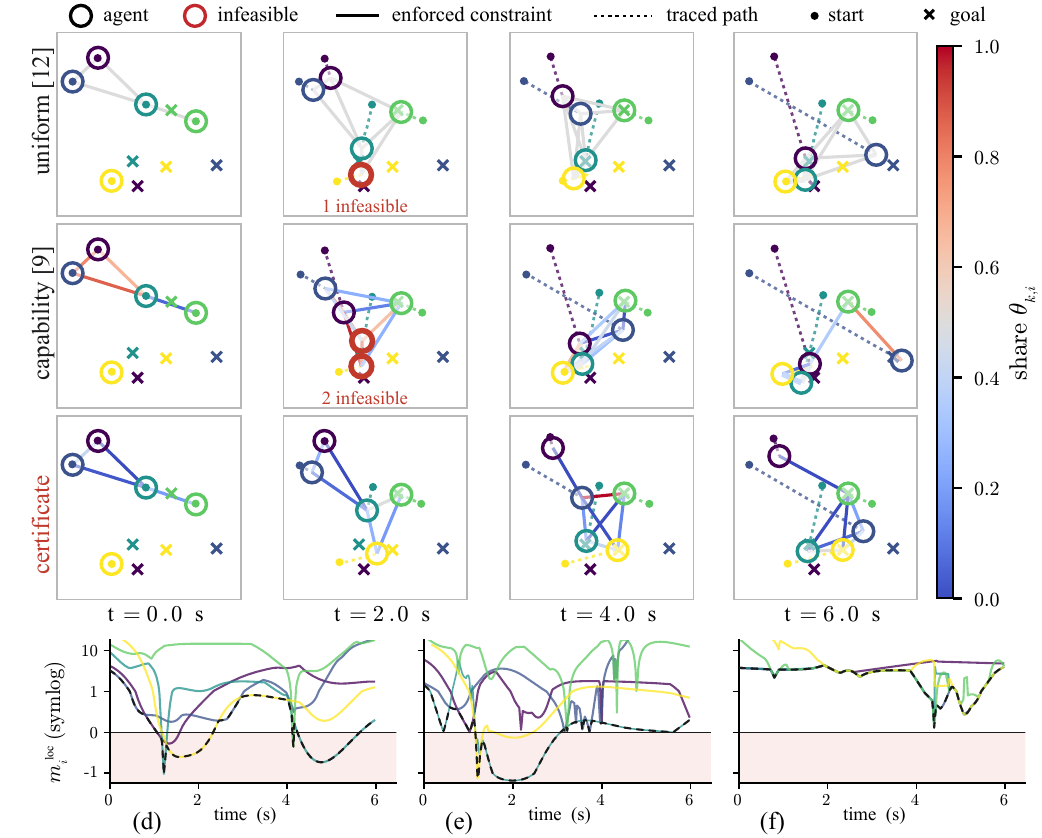}
\caption{Comparison of three responsibility allocations for an identical task instance ($N=5$), with fixed initial conditions, goals, and actuation bounds. Top: four snapshots of each run; paths and goals are color-coded by agent, while edge color indicates the responsibility share $\theta_{k,i}$. Red outlines mark agents with infeasible local programs. (d)-(f) Local margins $m_i^{\mathrm{loc}}$; black dashed line shows the worst margin.}
\label{fig:story}
\end{figure*}
\begin{figure}[t]
\centering
\includegraphics[width=\columnwidth]{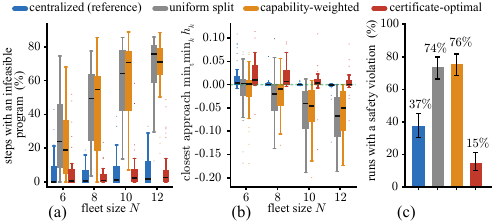}
\caption{Responsibility allocation in a decentralized filter on heterogeneous fleets, over $160$ paired runs with identical tasks for all methods. (a) Percentage of control steps at which a program is infeasible
(b) The resulting closest approach. (c) The percentage of the safety distance that is violated.}
\label{fig:alloc}
\end{figure}

\begin{table}[t]
\centering
\caption{Infeasible control steps (\%) over $160$ paired runs per fleet.}
\label{tab:alloc}
\small
\setlength{\tabcolsep}{3.5pt}
\begin{tabular}{llccccc}
\toprule
& allocation & $N{=}6$ & $N{=}8$ & $N{=}10$ & $N{=}12$ & pooled\\
\midrule
\multirow{4}{*}{\rotatebox{90}{\scriptsize identical}}
 & centralized (ref.) & 5.9 & 5.0 & 8.3 & 8.0 & 6.8\\
 & uniform \cite{wang2017safety} & 27.3 & 44.9 & 60.9 & 68.6 & 50.4\\
 & capability \cite{cahcbf2026} & 25.8 & 39.9 & 59.0 & 68.2 & 48.2\\
 & \textbf{certificate \eqref{eq:LP}} & \textbf{5.2} & \textbf{4.8} & \textbf{7.0} & \textbf{7.2} & \textbf{6.0}\\
\midrule
\multirow{4}{*}{\rotatebox{90}{\scriptsize heterog.}}
 & centralized (ref.) & 5.7 & 3.9 & 5.7 & 7.0 & 5.6\\
 & uniform \cite{wang2017safety} & 28.9 & 46.8 & 59.7 & 68.4 & 51.0\\
 & capability \cite{cahcbf2026} & 23.9 & 43.9 & 61.2 & 68.0 & 49.2\\
 & \textbf{certificate \eqref{eq:LP}} & \textbf{6.3} & \textbf{3.1} & \textbf{7.5} & \textbf{7.8} & \textbf{6.2}\\
\bottomrule
\end{tabular}
\end{table}

\subsection{Closed-Loop Evaluation of the Allocation}

We evaluate the decentralized filter of Section~\ref{sec:setup}, where each agent enforces only its allocated share $\theta_{k,i}b_k$ of each shared constraint. We compare uniform allocation \cite{wang2017safety}, capability-weighted allocation \cite{cahcbf2026}, and the proposed optimizer
\eqref{eq:LP}. A centralized filter is included as a reference. Each method is evaluated on the same $160$ paired random tasks for each fleet type, with
$N\in\{6,8,10,12\}$.

Figure~\ref{fig:story} illustrates a representative run under the three allocations. With $N=5$ and identical initial conditions, local program
infeasibility occurs at $45\%$ and $32\%$ of control steps under uniform and capability-weighted allocation, respectively, compared with only $0.3\%$ under the optimizer of \eqref{eq:LP}. All three runs impose the same safety constraints, and only their allocation among agents differs.

Table~\ref{tab:alloc} and Fig.~\ref{fig:alloc} show that the
certificate-optimal allocation substantially reduces local infeasibility. The
infeasible-step rate is $6.0\%$ on identical fleets and $6.2\%$ on
heterogeneous ones, against roughly $50\%$ for both heuristics and $6.8\%$ and
$5.6\%$ for the centralized reference. The proposed allocation therefore
recovers most of the feasibility of a centralized filter, while both
heuristics retain a large decentralization penalty. Safety follows: on
heterogeneous fleets the mean closest approach is $+0.017$ against $-0.044$
and $-0.038$ for the heuristics, and violations occur in $24/160$ runs against
$118/160$ and $121/160$.

We additionally evaluate the $t^\star<0$ branch of Algorithm~\ref{alg:alloc}, where no responsibility allocation can make all local programs feasible.
Across $87$ paired runs, the proposed coordinated input reduces infeasible steps from $6.1\%$ to $4.3\%$ and improves the mean closest approach from
$+0.016$ to $+0.020$, relative to independent per-agent relaxation. On the $24$ runs in which this branch is activated, the corresponding infeasible-step
rates are $10.7\%$ and $17.3\%$. These results demonstrate that the proposed allocation improves both local feasibility and the resulting closed-loop
safety, while capability weighting provides little benefit over uniform allocation.

\section{Limitations}
\label{sec:lim}
The allocation guarantees local feasibility, not task completion. Deadlock and liveness are therefore outside the scope of this work. The local certificate of Theorem~\ref{thm:local} is sufficient but not necessary, and although we establish that degenerate states exist, their closed-loop reachability is not characterized. Hardware validation is left for future work.

\section{Conclusion}
\label{sec:conc}

We presented an exact conic certificate for multi-robot CBF filter
infeasibility that separates constraint demand from actuator supply. It
determines whether a conflict can be resolved by gain tuning, by more
actuation, or only by redividing responsibility, and it exposes degenerate
underactuated states where extra actuation cannot help at all. Its dual
variables localize the conflict to a small set of interactions and grade the
agents by the marginal value of actuation. Because the certificate prices any
division of a shared constraint, the optimal division follows from a linear
program, which in closed loop recovers most of the feasibility of a
centralized filter and reduces safety violations well below either heuristic.
Together these turn an infeasibility flag into an actionable diagnosis.

\bibliographystyle{ieeetr}
\bibliography{citation.bib}

\section*{Appendix}

\subsection*{A. Chain closure and the encoding lever}
Lemma~\ref{lem:closure} is the induction behind
Theorem~\ref{thm:invariance}; Theorem~\ref{thm:lever} identifies the single
exception, in which the penultimate class-$\mathcal K$ function depends on
states outside its own chain and does reshape the input coefficients.

\begin{lemma}\label{lem:closure}
Fix $k$ and suppose $\alpha_{k,l}$ is chain-valued for every $l\le j$, with
$j\le r_k-1$. Then $\psi_{k,j}$ depends only on
$(h_k,\Lf h_k,\ldots,\Lf^j h_k)$. In particular, there exists a differentiable
$\Phi_{k,j}$ such that
\begin{equation}
\label{eq:closure}
\psi_{k,j}
=
\Lf^j h_k
+
\Phi_{k,j}\!\left(h_k,\Lf h_k,\ldots,\Lf^{j-1}h_k\right),
\end{equation}
with $\Phi_{k,0}\equiv0$. Moreover, $\Lg{i}\psi_{k,j}\equiv0$ for all
$i\in S_k$ whenever $j\le r_k-2$. Thus, the input can first appear at
$j=r_k-1$, exactly as prescribed by the relative-degree assumption.
\end{lemma}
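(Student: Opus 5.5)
The plan is an induction on $j$, from $0$ up to $r_k-1$. The first task is to fix what ``chain-valued'' means. I take it to mean that $\alpha_{k,l}$ is evaluated on the previous chain element, $\alpha_{k,l}=\alpha_{k,l}(\psi_{k,l-1})$, with $\alpha_{k,l}$ differentiable. More generally, it may be a differentiable function of $(\psi_{k,0},\dots,\psi_{k,l-1})$ only, so that it involves no states outside the chain. That is the only structural property the argument uses.

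The base case $j=0$ is immediate: $\psi_{k,0}=h_k$, so $\Phi_{k,0}\equiv0$.

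For the inductive step, assume \eqref{eq:closure} holds at index $j-1$ and that $j\le r_k-1$. Then $j-1\le r_k-2$, and Definition~\ref{def:reldeg} gives $\Lg{i}\Lf^m h_k\equiv0$ for all $m\le j-1$ and all $i\in S_k$. Along any trajectory, the chain rule gives $\frac{d}{dt}\Lf^m h_k=\Lf^{m+1}h_k+\sum_i \Lg{i}\Lf^m h_k\,\boldsymbol{u}_i=\Lf^{m+1}h_k$ for each $m\le j-1$, so the input drops out. Differentiating $\psi_{k,j-1}=\Lf^{j-1}h_k+\Phi_{k,j-1}(h_k,\dots,\Lf^{j-2}h_k)$ then yields
\[
\dot\psi_{k,j-1}=\Lf^{j}h_k+\sum_{m=0}^{j-2}\partial_m\Phi_{k,j-1}\,\Lf^{m+1}h_k ,
\]
and this expression is itself a function of $(h_k,\dots,\Lf^{j}h_k)$. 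By the inductive hypothesis, each earlier $\psi_{k,l}$ with $l\le j-1$ is a function of $(h_k,\dots,\Lf^{l}h_k)$. Hence $\alpha_{k,j}$ is a differentiable function of $(h_k,\dots,\Lf^{j-1}h_k)$, because it is chain-valued. Combining the two pieces, I set
\[
\Phi_{k,j}:=\sum_{m=0}^{j-2}\partial_m\Phi_{k,j-1}\,\Lf^{m+1}h_k+\alpha_{k,j}.
\]
This function depends only on $h_k,\dots,\Lf^{j-1}h_k$, since $m+1\le j-1$. It is differentiable provided $\Phi_{k,j-1}$ is $C^2$ and the $\alpha$'s are smooth. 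I will therefore carry a ``sufficiently smooth'' hypothesis through the induction, consistent with the standing smoothness assumptions on $\boldsymbol{f}_i$ and $\boldsymbol{g}_i$.

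The vanishing claim follows from the same representation. For $j\le r_k-2$,
\[
\Lg{i}\psi_{k,j}=\Lg{i}\Lf^{j}h_k+\sum_{m\le j-1}\partial_m\Phi_{k,j}\,\Lg{i}\Lf^{m}h_k ,
\]
and every term vanishes by Definition~\ref{def:reldeg}, because all indices are at most $r_k-2$. At $j=r_k-1$ the leading term $\Lg{i}\Lf^{r_k-1}h_k$ is, by the same definition, not identically zero for some $i$. This is why the input first appears at that stage.

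The main obstacle is justifying that $\dot\psi_{k,j-1}$ contains no input. This requires the relative-degree vanishing for every lower derivative that enters $\Phi_{k,j-1}$, so the induction must prove the closure form and the vanishing of $\Lg{i}$ simultaneously rather than separately. It also requires the chain-valued hypothesis so that $\alpha_{k,j}$ introduces no outside state dependence. Without that hypothesis, $\Lg{i}\alpha_{k,j}$ could be nonzero for agents whose states enter $\alpha_{k,j}$ but whose Lie derivatives are not controlled by Definition~\ref{def:reldeg}. That is exactly the exception deferred to Theorem~\ref{thm:lever}.
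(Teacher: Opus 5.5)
Your proof is correct and matches the paper's argument: induction on $j$, with the relative-degree condition removing every input term when $\psi_{k,j-1}$ is differentiated, and the chain-valued $\alpha_{k,j}$ absorbed into $\Phi_{k,j}$. Your version is more explicit than the paper's. You write out the recursion for $\Phi_{k,j}$ and note the extra smoothness needed to keep it differentiable, which the paper leaves implicit.
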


\begin{proof}
Induction on $j$. At $j=0$ the claim is $\psi_{k,0}=h_k$ with
$\Lg{i}h_k=0$. Assume it at $j-1\le r_k-2$. Differentiating,
$\dot\psi_{k,j-1}=\Lf\psi_{k,j-1}+\sum_i\Lg{i}\psi_{k,j-1}\boldsymbol{u}_i$, and every
Lie derivative in $\Lg{i}\psi_{k,j-1}$ has order at most $j-1\le r_k-2$ and so
vanishes by Definition~\ref{def:reldeg}. Hence
$\dot\psi_{k,j-1}=\Lf\psi_{k,j-1}=\Lf^{j}h_k+\widetilde\Phi_{k,j}$ for a
differentiable $\widetilde\Phi_{k,j}$ of the lower derivatives. Since
$\alpha_{k,j}$ is chain-valued it depends only on those same derivatives, and
absorbing it into $\widetilde\Phi_{k,j}$ gives \eqref{eq:closure}. Stage
$j=r_k-1$ is the first at which $\Lg{i}\psi_{k,j}$ need not vanish.  
\end{proof}

\begin{theorem}\label{thm:lever}
Let $\alpha_{k,l}$ be chain-valued for $l\le r_k-2$, and
\begin{equation}\label{eq:mult}
\alpha_{k,r_k-1}(\boldsymbol{x}) \;{=}\; a_k(\boldsymbol{x})\,\psi_{k,r_k-2}(\boldsymbol{x}),\qquad a_k \in C^1,
\end{equation}
with $r_k \ge 2$. Then $\psi_{k,r_k-1}$ is input-free, the terminal condition remains affine in $\boldsymbol{u}$, and
\begin{equation}\label{eq:lever}
\boldsymbol{G}_{k,i}(\boldsymbol{x}) \;{=}\; \underbrace{\Lg{i}\Lf^{\,r_k{-}1}h_k(\boldsymbol{x})}_{\text{invariant part}}
\;{+}\;\underbrace{\psi_{k,r_k{-}2}(\boldsymbol{x})\,\Lg{i}a_k(\boldsymbol{x})}_{\text{design part}}.
\end{equation}
\end{theorem}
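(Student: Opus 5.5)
The plan is to reduce everything to Lemma~\ref{lem:closure} applied at level $j=r_k-2$, and then differentiate the product structure in \eqref{eq:mult}.

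\textbf{Step 1: the level below the penultimate one.} Since $\alpha_{k,l}$ is chain-valued for all $l\le r_k-2$, Lemma~\ref{lem:closure} gives
\[
\psi_{k,r_k-2}=\Lf^{r_k-2}h_k+\Phi_{k,r_k-2}\big(h_k,\ldots,\Lf^{r_k-3}h_k\big),
\]
and it also gives $\Lg{i}\psi_{k,r_k-2}\equiv0$ for all $i\in S_k$. Hence
\[
\dot\psi_{k,r_k-2}=\Lf\psi_{k,r_k-2}.
\]
We then have
\[
\psi_{k,r_k-1}=\Lf\psi_{k,r_k-2}+a_k\,\psi_{k,r_k-2},
\]
which contains no $\boldsymbol{u}$. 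This proves the first claim. (For $r_k=2$ the same statement holds with $\psi_{k,0}=h_k$, directly by Definition~\ref{def:reldeg}.)

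\textbf{Step 2: affinity of the terminal condition.} Differentiating once more gives
\[
\psi_{k,r_k}=\Lf\psi_{k,r_k-1}+\sum_i\Lg{i}\psi_{k,r_k-1}\,\boldsymbol{u}_i+\alpha_{k,r_k}(\psi_{k,r_k-1}).
\]
Because $\psi_{k,r_k-1}$ is input-free, $\alpha_{k,r_k}$ evaluated at it is also input-free, so the terminal condition is affine in $\boldsymbol{u}$. This identifies $\boldsymbol{G}_{k,i}=\Lg{i}\psi_{k,r_k-1}$.

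\textbf{Step 3: computing the coefficient.} Apply the product rule to $a_k\psi_{k,r_k-2}$:
\[
\Lg{i}\psi_{k,r_k-1}=\Lg{i}\Lf\psi_{k,r_k-2}+\psi_{k,r_k-2}\,\Lg{i}a_k+a_k\,\Lg{i}\psi_{k,r_k-2}.
\]
The last term vanishes by Step 1.

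For the first term, expand $\Lf\psi_{k,r_k-2}$ using the closure form. This gives $\Lf^{r_k-1}h_k$ plus $\sum_{m\le r_k-3}\partial_m\Phi\,\Lf^{m+1}h_k$. Applying $\Lg{i}$ to this yields $\Lg{i}\Lf^{r_k-1}h_k$ plus terms of two kinds:
\begin{itemize}
\item terms $\partial_m\Phi\,\Lg{i}\Lf^{m+1}h_k$ with $m+1\le r_k-2$, which vanish by Definition~\ref{def:reldeg};
\item terms involving $\Lg{i}$ of $\partial_m\Phi(h_k,\ldots)$, which by the chain rule are combinations of $\Lg{i}\Lf^{l}h_k$ with $l\le r_k-3$, and also vanish.
\end{itemize}
Exactly this computation appears in the proof of Theorem~\ref{thm:invariance}. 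What remains is \eqref{eq:lever}.

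\textbf{Main obstacle.} The delicate point is the bookkeeping in Step 3. One has to check that differentiating $\Phi_{k,r_k-2}$ along $g_i$ produces only Lie derivatives of order at most $r_k-2$, and that $\Lg{i}a_k$ is genuinely the only new contribution. Note that $a_k$ is allowed to depend on states outside $S_k$, so $\Lg{i}a_k$ may be nonzero even for $i\notin S_k$. The statement is understood for $i\in S_k$, or else $\boldsymbol{G}_{k,i}$ must be read for all $i$ with $\Lg{i}a_k\ne0$. I would state this explicitly, and to handle $\partial_m\Phi$ I would invoke the differentiability of $\Phi$ from Lemma~\ref{lem:closure}.
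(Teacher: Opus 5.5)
Your proposal is correct and follows the paper's proof essentially step for step: Lemma~\ref{lem:closure} at level $r_k-2$ gives $\Lg{i}\psi_{k,r_k-2}=0$; the product rule on $a_k\psi_{k,r_k-2}$ isolates the design term; and the closure expansion reduces $\Lg{i}\Lf\psi_{k,r_k-2}$ to $\Lg{i}\Lf^{r_k-1}h_k$. Two of your additions are valid refinements of details the paper leaves implicit: the explicit treatment of the chain-rule terms $\Lg{i}(\partial_m\Phi)$, which tacitly needs $\Phi_{k,r_k-2}$ to be twice differentiable (covered by the standing smoothness assumption), and the remark that $\Lg{i}a_k$ can be nonzero for $i\notin S_k$, which effectively enlarges the constraint's support.
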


\begin{proof}
Write $\psi := \psi_{k,r_k-2}$, so $\psi_{k,r_k-1}=\Lf\psi + a_k\psi$ (using
$\Lg{i}\psi=0$ from Lemma~\ref{lem:closure}). This is a state function, so $\psi_{k,r_k-1}$ is input-free
and $\psi_{k,r_k}$ is affine in $\boldsymbol{u}$. Differentiating with the product rule, $\Lg{i}\psi_{k,r_k-1}=\Lg{i}(\Lf\psi)+\psi\,\Lg{i}a_k+a_k\,\Lg{i}\psi.$ The third term vanishes. For the first, by \eqref{eq:closure},
$\Lf\psi=\Lf^{r_k-1}h_k+\sum_{m\le r_k-3}\partial_m\Phi_{k,r_k-2}\Lf^{m+1}h_k$, whose $\Lg{i}$ is $\Lg{i}\Lf^{r_k-1}h_k$ because all remaining indices are $\le r_k-2$.  
\end{proof}

Thus, the design can modify the supply only through \eqref{eq:lever}. Three properties are notable: if $\Lg{i}a_k$ is not parallel to $\Lg{i}\Lf^{r_k-1}h_k$, it rotates the constraint gradient; its magnitude
scales with $\psi_{k,r_k-2}$ and vanishes on the corresponding boundary; and it requires $a_k$ to depend on states outside its own chain. If $a_k$ is
chain-valued, Lemma~\ref{lem:closure} gives $\Lg{i}a_k=0$, reducing \eqref{eq:lever} to Theorem~\ref{thm:invariance}.

The choice \eqref{eq:mult} scales the design term with $\psi_{k,r_k-2}$, so it offers freedom before a
conflict rather than during one, and it requires $a_k$ to depend on states outside its own chain: a chain-valued $a_k$ gives $\Lg{i}a_k=0$ and recovers
Theorem~\ref{thm:invariance}. Any bounded, strictly positive $a_k$ preserves forward invariance~\cite{khalil2002}, so the lever carries no safety cost, but it changes both $\mathcal{C}_\alpha$ and $\mathcal{F}$.

\subsection*{B. When degeneracy can and cannot occur}
For the agents of Section~\ref{sec:sim}, degeneracy is not vacuous.
\begin{proposition}\label{prop:ring}
Place $N$ agents at the vertices of a regular polygon,
$\boldsymbol{q}_i=R\hat{\boldsymbol{\rho}}_i$, each at rest with its body axis tangential, $\hat{\boldsymbol{e}}_i=\hat{\boldsymbol{\tau}}_i$, and let every pair be enforced. Then $\boldsymbol{\nu}_i(\boldsymbol{\lambda})=0$ for every agent at the uniform multiplier $\lambda_k=1/|E|$, so $S^\star(\boldsymbol{x})=0$ and $\boldsymbol{x}$ is degenerate. 
\end{proposition}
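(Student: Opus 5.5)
We compute the constraint directions $\boldsymbol{\nu}_{k,i}=\boldsymbol{G}_{k,i}^\top$ explicitly for the pairwise barriers of Section~\ref{sec:sim}. Then we show that at the uniform multiplier each agent's aggregate direction vanishes by symmetry. For $h_k=\norm{\boldsymbol{q}_i-\boldsymbol{q}_j}^2-(2r)^2$ with relative degree two, Theorem~\ref{thm:invariance} applies because the gains are constant multiples of chain quantities. Hence $\boldsymbol{G}_{k,i}=\Lg{i}\Lf h_k$. Since $\Lf h_k=2\ip{\boldsymbol{q}_i-\boldsymbol{q}_j}{\boldsymbol{v}_i-\boldsymbol{v}_j}$ and the input enters $\dot{\boldsymbol{v}}_i$ along $\hat{\boldsymbol{e}}_i=(\cos\phi_i,\sin\phi_i)$, we get the scalar
\begin{equation*}
\boldsymbol{\nu}_{k,i}=2\ip{\boldsymbol{q}_i-\boldsymbol{q}_j}{\hat{\boldsymbol{e}}_i},
\end{equation*}
with the opposite sign for agent $j$. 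The heading dynamics do not contribute, because $\phi_i$ does not appear in $\Lf h_k$ at rest, and in any case it is not driven by $u_i$.

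Next we use the ring geometry. With $\boldsymbol{q}_i=R\hat{\boldsymbol{\rho}}_i$ and $\hat{\boldsymbol{e}}_i=\hat{\boldsymbol{\tau}}_i\perp\hat{\boldsymbol{\rho}}_i$, we have $\ip{\boldsymbol{q}_i}{\hat{\boldsymbol{e}}_i}=0$, so $\boldsymbol{\nu}_{k,i}=-2\ip{\boldsymbol{q}_j}{\hat{\boldsymbol{\tau}}_i}$. At $\lambda_k=1/|E|$ with every pair enforced,
\begin{equation*}
\boldsymbol{\nu}_i(\boldsymbol{\lambda})=-\tfrac{2}{|E|}\Big\langle \sum_{j\neq i}\boldsymbol{q}_j,\hat{\boldsymbol{\tau}}_i\Big\rangle.
\end{equation*}
The vertices of a regular polygon sum to zero, so $\sum_{j\ne i}\boldsymbol{q}_j=-\boldsymbol{q}_i$. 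Therefore $\boldsymbol{\nu}_i(\boldsymbol{\lambda})=\tfrac{2}{|E|}\ip{\boldsymbol{q}_i}{\hat{\boldsymbol{\tau}}_i}=0$ for every $i$. An alternative route is to pair $j$ with its mirror image across the line through $\boldsymbol{q}_i$ and the center; the tangential components of the two cancel. Either argument works, and the centroid identity is the shortest.

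Finally, $\boldsymbol{\lambda}$ lies in $\Deltasx$ and $\boldsymbol{\nu}_i(\boldsymbol{\lambda})=0$ trivially satisfies \eqref{eq:degen}. By Theorem~\ref{thm:degen}, $S^\star(\boldsymbol{x})=0$. The direct check is also immediate: $\sigma_{\W_i}(0)=0$ and $S\ge0$.

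The only delicate point is the first step. We must confirm that $\boldsymbol{G}_{k,i}$ really reduces to the projection onto the body axis, with no contribution from the heading-tracking subsystem or from the class-$\mathcal K$ terms. The chain-valued gains and relative degree two make this follow from Theorem~\ref{thm:invariance}. We also need the at-rest assumption so that no velocity-dependent terms enter $\boldsymbol{G}$; $\boldsymbol{G}$ is independent of $\boldsymbol{v}$ here anyway. Sign conventions for $\boldsymbol{\nu}_{k,j}$ are irrelevant, since each agent's sum vanishes separately.
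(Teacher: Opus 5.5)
Your proof is correct and is essentially the paper's argument: you compute $\boldsymbol{G}_{k,i}=2\ip{\boldsymbol{q}_i-\boldsymbol{q}_j}{\hat{\boldsymbol{e}}_i}$ and use the centroid identity $\sum_j\boldsymbol{q}_j=0$ to show that the aggregate normal is radial, hence orthogonal to the tangential body axis. The paper states the same step as $\sum_{j\neq i}\boldsymbol{p}_{ij}=N\boldsymbol{q}_i$; and, as you half-note, the at-rest hypothesis plays no role in $S^\star=0$, since $\boldsymbol{G}$ does not depend on $\boldsymbol{v}$ and rest only affects the demand $\boldsymbol{b}$.
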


\begin{proof}
Here $\boldsymbol{G}_{k,i}=2\ip{\boldsymbol{p}_{ij}}{\hat{\boldsymbol{e}}_i}$ with
$\boldsymbol{p}_{ij}=\boldsymbol{q}_i-\boldsymbol{q}_j$. Summing over the pairs incident to agent $i$
and using $\sum_j\boldsymbol{q}_j=0$ gives $\sum_{j\ne i}\boldsymbol{p}_{ij}=N\boldsymbol{q}_i$, which is
radial, whereas $\hat{\boldsymbol{e}}_i$ is tangential. Hence $\boldsymbol{\nu}_i(\boldsymbol{\lambda})=0$
and $S(\boldsymbol{x},\boldsymbol{\lambda})=0$.
\end{proof}

\end{document}